\def\figref#1{figure~\ref{#1}}
\def\eqref#1{equation~\ref{#1}}
\def\algref#1{algorithm~\ref{#1}}
\def\1{\bm{1}}
\def\rva{{\mathbf{a}}}
\def\rvb{{\mathbf{b}}}
\def\rvc{{\mathbf{c}}}
\def\rvh{{\mathbf{h}}}
\def\rvp{{\mathbf{p}}}
\def\rvw{{\mathbf{w}}}
\def\rvx{{\mathbf{x}}}
\def\rvy{{\mathbf{y}}}
\def\rmD{{\mathbf{D}}}
\def\rmI{{\mathbf{I}}}
\def\rmJ{{\mathbf{J}}}
\def\rmW{{\mathbf{W}}}
\def\vs{{\bm{s}}}
\DeclareMathAlphabet{\mathsfit}{\encodingdefault}{\sfdefault}{m}{sl}
\SetMathAlphabet{\mathsfit}{bold}{\encodingdefault}{\sfdefault}{bx}{n}
\newcommand{\R}{\mathbb{R}}
\newcommand{\softmax}{\mathrm{softmax}}
\DeclareMathOperator*{\argmax}{argmax}
\DeclareMathOperator{\sign}{sign}
\newcommand{\ignore}[1]{}
\newcommand{\norm}[1]{\left\lVert#1\right\rVert}
\theoremstyle{definition}
\newtheorem{pro}{Proposition}
\theoremstyle{plain}
\newcommand{\SKIP}[1]{}
\newcommand{\bfx}{\mathbf{x}}
\newcommand{\bfy}{\mathbf{y}}
\newcommand{\I}{\mathbbm{1}}
\newcommand{\calX}{\mathcal{X}}
\newcommand{\bfzero}{\boldsymbol{0}}
\renewcommand{\R}{\rm I\!R}
\newcommand{\bfp}{\mathbf{p}}
\newcommand{\calD}{\mathcal{D}}
\newcommand{\bfw}{\mathbf{w}}
\newcommand{\bfu}{\mathbf{u}}
\newcommand{\bfa}{\mathbf{a}}
\newcommand{\calQ}{\mathcal{Q}}
\renewcommand{\sign}{\operatorname{sign}}
\renewcommand{\softmax}{\operatorname{softmax}}
\newcommand{\bfg}{\mathbf{g}}
\newcommand{\sbfw}{\mathbf{w}^*}
\newcommand{\alllayers}{\{1\ldots K\}}
\newcommand{\bfv}{\mathbf{v}}
\newcommand{\bnns}{\acrshort{BNN}s}
\def\myref#1{{\color{red}{#1}}}%
\DeclareRobustCommand\onedot{\futurelet\@let@token\@onedot}
\def\@onedot{\ifx\@let@token.\else.\null\fi\xspace}
\def\eg{\emph{e.g}\onedot} 
\def\ie{\emph{i.e}\onedot} 
 \def\vs{\emph{vs}\onedot}
\newenvironment{tight_itemize}{
\begin{itemize}[leftmargin=10pt]
  \setlength{\topsep}{0pt}
  \setlength{\itemsep}{0pt}
  \setlength{\parskip}{0pt}
  \setlength{\parsep}{0pt}
}{\end{itemize}}
\newenvironment{tight_enumerate}{
\begin{enumerate}[leftmargin=10pt]
  \setlength{\topsep}{0pt}
  \setlength{\itemsep}{0pt}
  \setlength{\parskip}{0pt}
  \setlength{\parsep}{0pt}
}{\end{enumerate}}
\def\figref#1{Fig.~\ref{#1}}
\def\eqref#1{Eq.~(\ref{#1})}
\def\algref#1{Algorithm~\ref{#1}}
\def\tabref#1{Table~\ref{#1}}
\def\proref#1{Proposition~\ref{#1}}
\def\tabref#1{Table~\ref{#1}}
\algnewcommand\INPUT{\item[\textbf{Input:}]}%
\algnewcommand\OUTPUT{\item[\textbf{Output:}]}%
\newacronym{MRF}{mrf}{Markov Random Field}
\newacronym{SGD}{sgd}{Stochastic Gradient Descent}
\newacronym{GD}{gd}{Gradient Descent}
\newacronym{PGD}{pgd}{Projected Gradient Descent}
\newacronym{PGD++}{pgd++}{Improved Projected Gradient Descent}
\newacronym{FGSM}{fgsm}{Fast Gradient Sign Method}
\newacronym{FGSM++}{fgsm++}{Improved Fast Gradient Sign Method}
\newacronym{IFGSM}{ifgsm}{Iterative Fast Gradient Sign Method}
\newacronym{TS}{ts}{Temperature Scaling}
\newacronym{JSV}{jsv}{Jacobian Singular Values}
\newacronym{MJSV}{mjsv}{Mean Jacobian Singular Values}
\newacronym{ICM}{icm}{Iterative Conditional Modes}
\newacronym{DNN}{dnn}{Deep Neural Networks}
\newacronym{NN}{nn}{Neural Network}
\newacronym{PMF}{pmf}{Proximal Mean-Field}
\newacronym{PICM}{picm}{Proximal Iterative Conditional Modes}
\newacronym{BC}{bc}{BinaryConnect}
\newacronym{BWN}{bwn}{Binary Weight Network}
\newacronym{IP}{ip}{Integer Programming}
\newacronym{fc}{fc}{fully-connected}
\newacronym{REF}{ref}{Reference Network}
\newacronym{KL}{kl}{KL}
\newacronym{LR}{lr}{LR}
\newacronym{KKT}{kkt}{KKT}
\newacronym{MAP}{map}{Maximum a Posteriori}
\newacronym{MDA}{mda}{Mirror Descent Algorithm}
\newacronym{MD}{md}{Mirror Descent}
\newacronym{EDA}{eda}{Entropic Descent Algorithm}
\newacronym{ED}{ed}{Entropic Descent}
\newacronym{EGD}{egd}{Exponentiated Gradient Descent}
\newacronym{PQ}{pq}{ProxQuant}
\newacronym{STE}{ste}{Straight Through Estimator}
\newacronym{HGD}{hgd}{Hybrid Gradient Descent}
\newacronym{S}{s}{S}
\newacronym{LLM}{llm}{Local Linearity Measure}
\newacronym{BNN-WQ}{bnn-wq}{Binary Neural Network with Weights Quantized}
\newacronym{BNN-WAQ}{bnn-waq}{Binary Neural Network with Weights and Activations Quantized}
\newacronym{BBA}{bba}{Brendel \& Bethge Attack}
\newacronym{CLEVER}{clever}{CLEVER}
\newacronym{BNN}{bnn}{Binary Neural Network}
\newacronym{JSVS-GT}{jsvs-gt}{Jacobian Singular Values Scaling and Gradient Thresholding}
\newacronym{NJS}{njs}{Network Jacobian Scaling}
\newacronym{HNS}{hns}{Hessian Norm Scaling}
\newacronym{IGNS}{igns}{Input Gradient Norm Scaling}
\newacronym{ORIG.}{orig.}{Original}
\newacronym{L-BFGS}{l-bfgs}{L-BFGS}
\newacronym{CW}{cw}{Carlini Wagner}
\newacronym{APGD}{apgd}{Auto-\acrshort{PGD}}
\newacronym{DLR}{dlr}{Difference of Logits Ratio}
\newacronym{IPROP}{iprop}{IPROP}
\newcommand{\svgg}[1]{{\small VGG#1}}
\newcommand{\sresnet}[1]{{\small ResNet#1}}
\newcommand{\sdensenet}[1]{{\small DenseNet#1}}
\newcommand{\cifar}[1]{{\small CIFAR#1}}
\newif\ifsupp
\newif\ifarxiv
\newif\iffinal
\newcommand{\citenew}[1]{\cite{#1}}
\title{Improved Gradient based Adversarial Attacks for Quantized Networks}
\author{
    Kartik Gupta \textsuperscript{\rm1,3}, Thalaiyasingam Ajanthan  $^\dagger$\textsuperscript{\rm1,2} 
}
\begin{document}

\maketitle
\begin{abstract}
Neural network quantization has become increasingly popular due to efficient memory consumption and faster computation resulting from bitwise operations on the quantized networks. 
Even though they exhibit excellent generalization capabilities, their robustness properties are not well-understood.
In this work, we systematically study the robustness of quantized networks against gradient based adversarial attacks and demonstrate that these quantized models suffer from gradient vanishing issues and show a fake sense of robustness.
By attributing gradient vanishing to poor forward-backward signal propagation in the trained network, we introduce a simple temperature scaling approach to mitigate this issue while preserving the decision boundary.
Despite being a simple modification to existing gradient based adversarial attacks, experiments on multiple image classification datasets with multiple network architectures demonstrate that our temperature scaled attacks obtain near-perfect success rate on quantized networks while outperforming original attacks on adversarially trained models as well as floating-point networks\footnote{Open-source implementation available at \url{https://github.com/kartikgupta-at-anu/attack-bnn}}.
\end{abstract}

\section{Introduction}
\acrfull{NN} quantization has become increasingly popular due to reduced memory and time complexity enabling real-time applications and inference on resource-limited devices. 
Such quantized networks often exhibit excellent generalization capabilities despite having low capacity due to reduced precision for parameters and activations. 
However, their robustness properties are not well-understood. 
In particular, while parameter quantized networks are claimed to have better robustness against gradient based adversarial attacks~\citenew{galloway2018attacking}, activation only quantized methods are shown to be vulnerable~\citenew{lin2018defensive}.

In this work, we consider the extreme case of Binary Neural Networks~(\acrshort{BNN}s) and systematically study the robustness properties of parameter quantized models, as well as both parameter and activation quantized models against gradient based adversarial attacks.
Our analysis reveals that these quantized models suffer from gradient masking issues~\citenew{athalye2018obfuscated} and in turn show fake robustness.
We attribute this vanishing gradients issue to poor forward-backward signal propagation caused by trained binary weights, and our idea is to improve signal propagation of the network without changing the prediction.

There is a body of work on improving signal propagation in a neural network (\eg,~\cite{glorot2010understanding,pennington2017resurrecting,lu2020bidirectional}), however, we are facing a unique challenge of {\em improving signal propagation while preserving the decision boundary}, since our ultimate objective is to generate adversarial attacks. 
To this end, we first discuss the conditions to ensure informative gradients and then resort to a temperature scaling approach~\citenew{guo2017calibration} (which scales the logits before applying softmax cross-entropy) to show that, even with a single positive scalar the vanishing gradients issue in \bnns{} can be alleviated achieving {\it near perfect success rate}.

Specifically, we introduce two techniques to choose the temperature scale: 1) based on the singular values of the input-output Jacobian, 2) by maximizing the norm of the Hessian of the loss with respect to the input.
The justification for the first case is that if the singular values of input-output Jacobian are concentrated around $1$ (defined as dynamical isometry~\citenew{pennington2017resurrecting}) then the network is said to have good signal propagation. 
On the other hand, the intuition for maximizing the Hessian norm is that if the Hessian norm is large, then the gradient of the loss with respect to the input is sensitive to an infinitesimal change in the input. 
This is a sufficient condition for the network to have good signal propagation as well as informative gradients under the assumption that the network does not have any randomized or non-differentiable components.

In summary, this paper makes the following contributions:
\vspace{-2.5ex}
\begin{itemize}[leftmargin=*]
    \item We first show via various empirical checks that \bnns{} possess fake robustness against gradient based adversarial attacks such as \acrshort{FGSM}~\citenew{goodfellow2014explaining} and \acrshort{PGD}~\citenew{madry2017towards}.
    \item By accounting poor signal propagation for the failure of gradient based adversarial attacks, we present temperature scaling based solution to improve the existing attacks without changing the prediction of the network.
    \item In order to estimate appropriate scalar for temperature scaling in gradient based adversarial attacks, we present two variants namely \acrfull{NJS} and \acrfull{HNS} motivated from point of view of improving the signal propagation.
    \item With experimental evaluations using several network architectures on \cifar{-10/100} datasets, we show that our proposed techniques to modify existing gradient based adversarial attacks achieve near perfect success rate on \bnns{} with weight quantized (\acrshort{BNN-WQ}) and weight and activation quantized (\acrshort{BNN-WAQ}). Furthermore, our variants improves attack success even on adversarially trained models as well as floating point networks, showing the significance of signal propagation for adversarial attacks.
\end{itemize}
\section{Preliminaries}
We first provide some background on the neural network quantization and adversarial attacks. 

\subsection{Neural Network Quantization}\label{sec:nnq}
\gls{NN} quantization is defined as training networks with parameters constrained to a minimal, discrete set of quantization levels. 
This primarily relies on the hypothesis that since \gls{NN}s are usually overparametrized, it is possible to obtain a quantized network with performance comparable to the floating point network.
Given a dataset $\calD=\{\bfx_i, \bfy_i\}_{i=1}^n$, \gls{NN} quantization can be written as:
\vspace{-1ex}
\begin{equation}\label{eq:dnnobj}
\min_{\bfw\in \calQ^m} L(\bfw;\calD) := \frac{1}{n} \sum_{i=1}^n
\ell(\bfw;(\bfx_i,\bfy_i))\ .
\end{equation}
Here, $\ell(\cdot)$ denotes the input-output mapping composed with a standard loss function (\eg, cross-entropy loss), $\bfw$ is the $m$ dimensional parameter vector, and $\calQ$ is a predefined discrete set representing quantization levels (\eg, $\calQ=\{-1,1\}$ in the binary case). 

Most of the \gls{NN} quantization approaches~\citenew{ajanthan2019proximal,ajanthan2019mirror,bai2018proxquant,hubara2017quantized} convert the above problem into an unconstrained problem by introducing auxiliary variables
and optimize via (stochastic) gradient descent. 
To this end, the algorithms differ in the choice of quantization set (\eg, keep it discrete~\citenew{courbariaux2015binaryconnect}, relax it to the convex hull~\citenew{bai2018proxquant} or convert the problem into a lifted probability space~\citenew{ajanthan2019proximal}), the projection used, and how differentiation through projection is performed. 
In the case when the constraint set is relaxed, a gradually increasing annealing hyperparameter is used to enforce a quantized solution~\citenew{ajanthan2019proximal,ajanthan2019mirror,bai2018proxquant}. We refer the interested reader to respective papers for more detail. In this paper, we use \acrshort{BNN-WQ} obtained using \acrshort{MD}-$\tanh$-\acrshort{S}~\citenew{ajanthan2019mirror} and \acrshort{BNN-WAQ} obtained using obtained using Straight Through Estimation ~\cite{hubara2017quantized}. Briefly \acrshort{MD}-$\tanh$-\acrshort{S} represents network binarization method based on mirror descent optimization where the mirror map is derived using $\tanh$ projection function.

\subsection{Adversarial Attacks}\label{sec:adv_attack}
Adversarial examples consist of imperceptible perturbations to the data that alter the model's prediction with high confidence.
Existing attacks can be categorized into white-box and black-box attacks where the difference lies in the knowledge of the adversaries. White-box attacks allow the adversaries access to the target model's architecture and parameters, whereas black-box attacks can only query the model.
Since white-box gradient based attacks are popular, we summarize them below.

First-order gradient based attacks can be compactly written as \gls{PGD} on the negative of the loss function~\citenew{madry2017towards}. 
Formally, let $\bfx^0\in\R^N$ be the input image, then at iteration $t$, the \gls{PGD} update can be written as:
\begin{equation}\label{eq:pgd-general}
\bfx^{t+1} = P\left(\bfx^{t} + \eta\, \bfg^t_{\bfx}\right)\ ,
\end{equation}
where $P:\R^N \to \calX$ is a projection, $\calX\subset\R^N$ is the constraint set that bounds the perturbations, $\eta >0$ is the step size, and $\bfg^t_{\bfx}$ is a form of gradient of the loss with respect to the input $\bfx$ evaluated at $\bfx^t$.
With this general form, the popular gradient based adversarial attacks can be specified:
\begin{tight_itemize}
\item {\bf \gls{FGSM}}: This is a one step attack introduced in~\cite{goodfellow2014explaining}. Here, $P$ is the identity mapping, $\eta$ is the maximum allowed perturbation magnitude, and $\bfg_{\bfx}^t = \sign\left(\nabla_{\bfx} \ell(\sbfw; (\bfx^t, \bfy))\right)$, where $\ell$ denotes the loss function, $\sbfw$ is the trained weights and $\bfy$ is the ground truth label corresponding to the image $\bfx^0$. 
\item {\bf \gls{PGD} with $L_{\infty}$ bound}: Arguably the most popular adversarial attack introduced in~\cite{madry2017towards} and sometimes referred to as \gls{IFGSM}. Here, $P$ is the $L_{\infty}$ norm based projection, $\eta$ is a chosen step size, and $\bfg_{\bfx}^t = \sign\left(\nabla_{\bfx} \ell(\sbfw; (\bfx^t, \bfy))\right)$, the sign of gradient same as \gls{FGSM}.
\item {\bf \gls{PGD} with $L_{2}$ bound}: This is also introduced in~\cite{madry2017towards} which performs the standard \gls{PGD} in the Euclidean space. Here, $P$ is the $L_{2}$ norm based projection, $\eta$ is a chosen step size, and $\bfg_{\bfx}^t = \nabla_{\bfx} \ell(\sbfw; (\bfx^t, \bfy))$ is simply the gradient of the loss with respect to the input.
\end{tight_itemize}
These attacks have been further strengthened by a random initial step~\citenew{tramer2017ensemble}.
In this paper, we perform this single random initialization for all experiments with \acrshort{FGSM}/\acrshort{PGD} attack unless otherwise mentioned.

\begin{figure*}[!t]
        \begin{subfigure}{0.33\textwidth}
          \includegraphics[width=\textwidth]{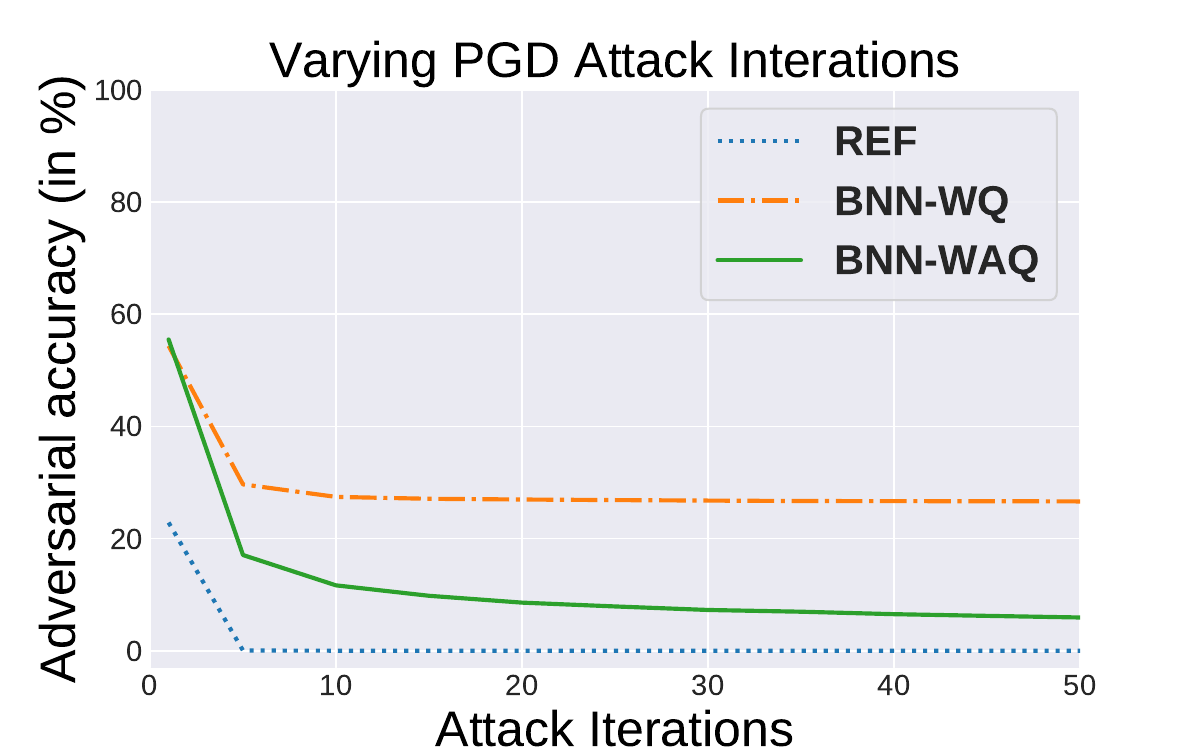}
          \vspace{-2ex}
        \caption{Attack iterations does not improve attack.}          
        \end{subfigure}          
        \begin{subfigure}{0.33\textwidth}
          \includegraphics[width=\textwidth]{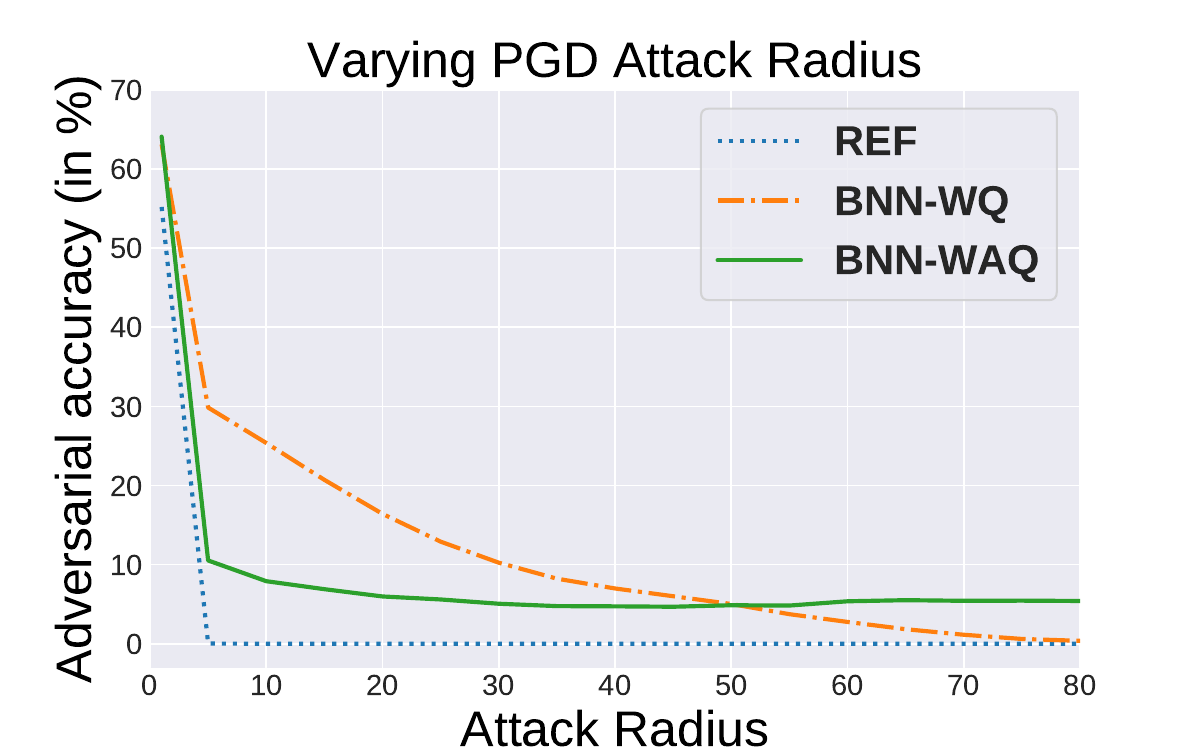}
          \vspace{-2ex}
        \caption{Attack radius does not improve attack.}          
        \end{subfigure}          
        \begin{subfigure}{0.28\textwidth}
          \includegraphics[width=\textwidth]{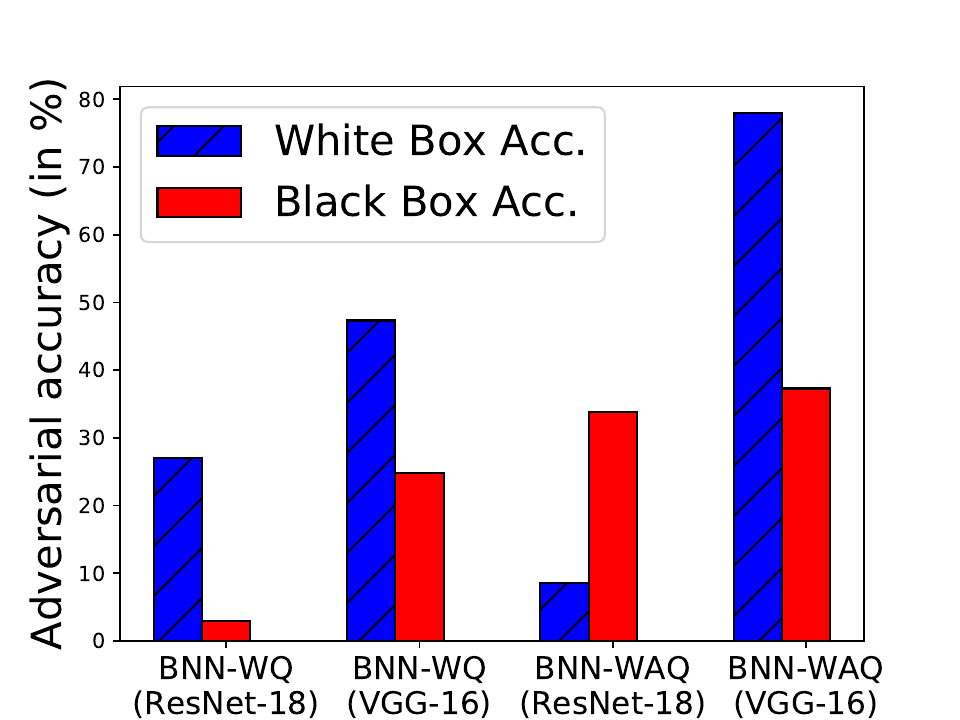}
          \vspace{-2ex}
        \caption{Black-box attacks perform better.}          
        \end{subfigure}          
    \vspace{-2ex}
    \caption{\em Gradient masking checks in \sresnet{-18} on \cifar{-10} for \acrshort{PGD} attack with $L_\infty$ bound. While (a), (c) show signs of gradient masking, (b) does not. We attribute this discrepancy to the random initial step before \gls{PGD}.
    }
\label{fig:obfuscated}
\end{figure*}
\SKIP{
\begin{wraptable}{r}{6.5cm}
\vspace{-3ex}    
    \small
    \begin{tabular}{l|cr|cr}
        \toprule[1.5pt]
         \multirow{2}{*}{\textbf{Method}} & \multicolumn{2}{c|}{\textbf{\sresnet{-18}}} & \multicolumn{2}{c}{\textbf{\svgg{-16}}} \\
         & \textbf{Clean} & \textbf{Adv.} & \textbf{Clean} & \textbf{Adv.}\\
        \midrule[1pt]
        \textbf{\acrshort{REF}} & 94.46 & 0.00 & 93.31 & 0.04 \\
        \textbf{Adv. Train} & 82.82 & 48.73 & 70.79 & 41.54 \\ 
        \midrule
        \textbf{\acrshort{BNN-WQ}} & 93.18 & 26.98 & 91.53 & 47.32 \\
        \textbf{\acrshort{BNN-WAQ}} & 87.67 & 8.57 & 89.69 & 78.01 \\
        \bottomrule[1.5pt]
    \end{tabular}
    \vspace{-1ex}
    \caption{\em Clean and adversarial accuracy (PGD attack with $L_\infty$ bound) on the test set of \cifar{-10} using \sresnet{-18} and \svgg{-16}. \acrshort{BNN}s outperform adversarial accuracy of floating point networks (even adversarially trained ones in some cases).}
    \vspace{-7ex}    
    \label{tab:adv_rob_bnn_cifar}
\end{wraptable}
}

\begin{table}[!t]
    \scriptsize
    \centering
    \begin{tabular}{l|c@{\hspace{4pt}}c@{\hspace{4pt}}c|c@{\hspace{4pt}}c@{\hspace{4pt}}c}
        \toprule
         \multirow{2}{*}{\textbf{Method}} & \multicolumn{3}{c|}{\textbf{ResNet-18}} & \multicolumn{3}{c}{\textbf{VGG-16}} \\
         \cmidrule{2-7}
         & \textbf{Clean} & \textbf{Adv.(1)} & \textbf{Adv.(20)} & \textbf{Clean} & \textbf{Adv.(1)} & \textbf{Adv.(20)}\\
        \midrule
        \textbf{\acrshort{REF}} & 94.46 & 0.00 & 0.00 & 93.31 & 0.04 & 0.00 \\
        \textbf{\acrshort{BNN-WQ}} & 93.18 & 26.98 & 17.91 & 91.53 & 47.32 & 38.49 \\
        \textbf{\acrshort{BNN-WAQ}} & 87.67 & 8.57 & 1.94 & 89.69 & 78.01 & 59.26\\
        \bottomrule
    \end{tabular}
    \caption{\em Clean and adversarial accuracy (PGD attack with $L_\infty$ bound) on the test set of \cifar{-10} using \sresnet{-18} and \svgg{-16}. In brackets, we mention number of random restarts used to perform the attack. Note, \acrshort{BNN}s yield higher adversarial accuracy than floating point networks consistently.}
    \label{tab:adv_rob_bnn_cifar}
\end{table}

\SKIP{
\begin{wraptable}{r}{7.5cm}
    \scriptsize
    \centering
    \vspace{-2.5ex}    
    \begin{tabular}{l|c@{\hspace{4pt}}c@{\hspace{4pt}}c|c@{\hspace{4pt}}c@{\hspace{4pt}}c}
        \toprule
         \multirow{2}{*}{\textbf{Method}} & \multicolumn{3}{c|}{\textbf{ResNet-18}} & \multicolumn{3}{c}{\textbf{VGG-16}} \\
         \cmidrule{2-7}
         & \textbf{Clean} & \textbf{Adv.(1)} & \textbf{Adv.(20)} & \textbf{Clean} & \textbf{Adv.(1)} & \textbf{Adv.(20)}\\
        \midrule
        \textbf{\acrshort{REF}} & 94.46 & 0.00 & 0.00 & 93.31 & 0.04 & 0.00 \\
        \textbf{\acrshort{BNN-WQ}} & 93.18 & 26.98 & 17.91 & 91.53 & 47.32 & 38.49 \\
        \textbf{\acrshort{BNN-WAQ}} & 87.67 & 8.57 & 1.94 & 89.69 & 78.01 & 59.26\\
        \bottomrule
    \end{tabular}
    \vspace{-1ex}
    \caption{\em Clean and adversarial accuracy (PGD attack with $L_\infty$ bound) on the test set of \cifar{-10} using \sresnet{-18} and \svgg{-16}. In brackets, we mention number of random restarts used to perform the attack. Note, \acrshort{BNN}s yield higher adversarial accuracy than floating point networks consistently.}
    \vspace{-4.5ex}    
    \label{tab:adv_rob_bnn_cifar}

\end{wraptable}
}

\section{Robustness Evaluation of \bnns{}}\label{sec:robustness}
We start by evaluating the adversarial accuracy (i.e.\ accuracy on the perturbed data) of \acrshort{BNN}s using the \gls{PGD} attack with 
 perturbation bound of $8$ pixels (assuming each pixel in the image is in $[0,255]$) with respect to $L_{\infty}$ norm, step size $\eta = 2$ and the total number of iterations $T=20$. 
The attack details are the same in all evaluated settings unless stated otherwise.
We perform experiments on \cifar{-10} dataset using \sresnet{-18} and \svgg{-16} architectures and report the clean accuracy and \acrshort{PGD} adversarial accuracy with 1 and 20 random restarts in \tabref{tab:adv_rob_bnn_cifar}. 
It can be clearly and consistently observed that binary networks have high adversarial accuracy compared to the floating point counterparts. Even with 20 random restarts, \bnns{} clearly outperform floating point networks in terms of adversarial accuracy. 
Since this result is surprising, we investigate this phenomenon further to understand whether \acrshort{BNN}s are actually robust to adversarial perturbations or they show a fake sense of security due to obfuscated gradients~\citenew{athalye2018obfuscated}.


\paragraph{Identifying Obfuscated Gradients.} Recently, it has been shown that several defense mechanisms intentionally or unintentionally break gradient descent and cause obfuscated gradients and thus exhibit a false sense of security~\citenew{athalye2018obfuscated}. Several gradient based adversarial attacks tend to fail to produce adversarial perturbations in scenarios where the gradients are uninformative, referred to as gradient masking. Gradient masking can occur due to shattered gradients, stochastic gradients or exploding and vanishing gradients. 
We try to identify gradient masking in binary networks based on the empirical checks provided in~\cite{athalye2018obfuscated}. If any of these checks fail, it indicates gradient masking issue in \bnns. 

To illustrate this, we analyse the effects of varying different hyperparameters of \gls{PGD} attack on \acrshort{BNN}s trained on \cifar{-10} using \sresnet{-18} architecture.
Even though varying \gls{PGD} perturbation bound does not show any signs of gradient masking, varying attack iterations and black-box vs white-box results (on \sresnet{-18} and \svgg{-16}) clearly indicate gradient masking issues as depicted in \figref{fig:obfuscated}. The black-box attack outperforming white-box attack for \bnns{} certainly indicates gradient masking issues since the black-box attack do not use the gradient information from model being attacked.  
Here, our black-box model to a \acrshort{BNN} is the analogous floating point network trained on the same dataset and the attack is the same \gls{PGD} with $L_{\infty}$ bound.

\textit{These checks demonstrate that \acrshort{BNN}s are prone to gradient masking and exhibit fake robustness.} 
Note, shattered gradients occur due to non-differentiable components in the defense mechanism and stochastic gradients are caused by randomized gradients.
Since \acrshort{BNN}s are trainable from scratch and does not have randomized gradients, we narrow down gradient masking issue to vanishing or exploding gradients.
Since, vanishing or exploding gradients occur due to poor signal propagation, by introducing a single scalar, we discuss two approaches to mitigate this issue, which lead to almost $100\%$ success rate for gradient based attacks on \acrshort{BNN}s.


\section{Signal Propagation of Neural Networks}
We first describe how poor signal propagation in neural networks can cause vanishing or exploding gradients. 
Then we discuss the idea of introducing a single scalar to improve the existing gradient based attacks without affecting the prediction (\ie, decision boundary) of the trained models.
\SKIP{
For notational convenience, similar to~\cite{pennington2017resurrecting}, we consider a fully-connected neural network $f_\rvw$ with weights ${\rmW^l\in\R^{N_l\times N_{l-1}}}$, biases $\rvb^l\in\R^{N_{l-1}}$, pre-activations $\rvh^l\in\R^{N_l}$, and post-activations $\rva^l\in\R^{N_l}$, for $l \in \alllayers$ up to $K$ layers. Now, the feed-forward dynamics can be formulated as,
\begin{equation}
\label{eq:forward}
\rva^l = \phi(\rvh^l)\ ,\qquad \rvh^l = \rmW^l\rva^{l-1} + \rvb^l\ ,
\end{equation}
where $\phi: \R \to\R$ is an elementwise nonlinearity, and the input is denoted by $\rva^0=\rvx^0\in\R^N$.}

We consider a neural network $f_\rvw$ for an input $\rvx^0$, having post-activations $\rva^l$, for $l \in \alllayers$ up to $K$ layers and logits $\rva^K=f_\rvw(\rvx^0)$.
\SKIP{
Now consider the input-output Jacobian $\rmJ \in \mathbb{R}^{N\times N}$ given by
\begin{equation}
\begin{split}
\label{eqn:Jz}
\rmJ = \frac{\partial \rva^K}{\partial \rvx^0} = \prod_{l=1}^K \rmD^l \rmW^l.
\end{split}
\end{equation}
Here $\rmD^l$ is a diagonal matrix with entries $D^l_{ij} = \phi'({h}^l_i) \, \delta_{ij}$, where $\delta_{ij} = \I[i=j]$ is the Kronencker delta function and $\phi'$ denotes the derivative of non-linearity $\phi$. 
}
Now, since softmax cross-entropy is usually used as the loss function, we can write:
\begin{equation}\label{eq:smce}
    \ell(\bfa^K, \bfy) = -\bfy^T \log(\bfp)\ ,\qquad \bfp = \softmax(\bfa^K)\ ,
\end{equation}
where $\bfy\in\R^d$ is the one-hot encoded target label and $\log$ is applied elementwise.

For various gradient based adversarial attacks discussed earlier, gradient of the loss $\ell$ is used with respect to the input $\rvx^0$, which can also be formulated using chain rule as,
\begin{equation}
\begin{split}
\label{eqn:grad_input}
\frac{\partial \ell(\bfa^K, \bfy)}{\partial \rvx^0} = \frac{\partial \ell(\bfa^K, \bfy)}{\partial \rva^K}\frac{\partial \rva^K}{\partial \rvx^0} = \psi(\rva^K, \rvy)\, \rmJ \ ,
\end{split}
\end{equation}
where $\psi$ denotes the error signal and $\rmJ\in \mathbb{R}^{d\times N}$ is the input-output Jacobian. 
Here we use the convention that $\partial \bfv/\partial \bfu$ is of the form $\bfv$-size $\times$ $\bfu$-size.

Notice there are two components that influence the gradients, 1) the Jacobian $\rmJ$ and 2) the error signal $\psi$.
Gradient based attacks would fail if either the Jacobian is poorly conditioned or the error signal has saturating gradients, both of these will lead to vanishing gradients in $\partial \ell/\partial \rvx^0$. 

The effects of Jacobian on the signal propagation is studied in dynamical isometry and mean-field theory literature~\citenew{pennington2017resurrecting,saxe2013exact} and it is known that a network is said to satisfy dynamical isometry if the singular values of $\rmJ$ are concentrated near $1$.
Under this condition, error signals $\psi$ backpropagate isometrically through the network, approximately preserving its norm and all angles between error vectors. Thus, as dynamical isometry improves the trainability of the floating point networks, a similar technique can be useful for gradient based attacks as well. 

In fact, almost all initialization techniques (\eg,~\cite{glorot2010understanding}) approximately ensures that the Jacobian $\rmJ$ is well-conditioned for better trainability and it is hypothesized that approximate isometry is preserved even at the end of the training.
But, for \acrshort{BNN}s, the weights are constrained to be $\{-1,1\}$ and hence the weight distribution at end of training is completely different from the random initialization. 
Furthermore, it is not clear that fully-quantized networks can achieve well-conditioned Jacobian, which guided some research activity in utilizing layerwise scalars (either predefined or learned) to improve \acrshort{BNN} training~\citenew{mcdonnell2018training,rastegari2016xnor}. 
We would like to point out that the focus of this paper is to improve gradient based attacks on already trained \acrshort{BNN}s. 
To this end learning a new scalar to improve signal propagation at each layer is not useful as it can alter the decision boundary of the network and thus cannot be used in practice on already trained model. 

\SKIP{
\subsubsection{Improving Signal Propagation.} Recent methods have shown to improve signal propagation of the network by learning orthogonal weights. Such a mechanism can also be applied for trained networks where single scalar can be learnt at each layer to solve the following objective:
\begin{equation}
\begin{split}
\label{eqn:scalar_sp}
\min_{\rvc^l}\|(\rvc^l \odot \rmW)^T (\rvc^l \odot \rmW^l) -  \rmI^l\|_F,
\end{split}
\end{equation}
where $\rvc^l$ and $\rmI^l$ are the learnable scalars and identity matrix at layer $l$ respectively and $\|\cdot\|_F$ is the Frobenius norm. The above objective finds scalar such that scaled weights are close to orthogonality. This technique has been found to be effective in improving the signal propagation of the networks. However, for gradient ascent attacks, learning a new scalar to improve signal propagation at each layer $l$, can alter the decision
boundary of the network and thus cannot really be used in practise on already trained model. 
}
\paragraph{Temperature Scaling for better Signal Propagation.} 

In this paper, we propose to use a single scalar per network to improve the signal propagation of the network using temperature scaling. 
In fact, one could replace softmax with a monotonic function such that the prediction is not altered, however, we will show in our experiments that a single scalar with softmax has enough flexibility to improve signal propagation and yields almost $100\%$ success rate with \acrshort{PGD} attacks.
Essentially, we can use a scalar, $\beta > 0$ without changing the decision boundary of the network by preserving the relative order of the logits. 
Precisely, we consider the following:
\vspace{-1ex}
\begin{equation}
\begin{split}
\label{eqn:scalar_sp}
\rvp(\beta) = \softmax(\bar{\rva}^K)\ , \qquad \bar{\rva}^K = \beta\, \rva^K \ .
\end{split}
\end{equation}
Here, we write the softmax output probabilities $\bfp$ as a function of $\beta$ to emphasize that they are softmax output of temperature scaled logits.
Now since in this context, the only variable is the temperature scale $\beta$, we denote the loss and the error signal as functions of only $\beta$.
With this simplified notation the gradient of the temperature scaled loss with respect to the inputs can be written as:
\begin{equation}
\begin{split}
\label{eqn:grad_input_ts}
\frac{\partial \ell(\beta)}{\partial \rvx^0} = \frac{\partial \ell(\beta)}{\partial \bar{\rva}^K}\frac{\partial \bar{\rva}^K}{\partial \rva^K}\frac{\partial \rva^K}{\partial \rvx^0} = \psi(\beta) \beta\, \rmJ\ .
\end{split}
\end{equation}
Note that $\beta$ affects the input-output Jacobian linearly while it nonlinearly affects the error signal $\psi$.
\SKIP{
has changed and the input-output Jacobian has been scaled by the scalar $\beta$. The new error signal's $(\psi')$ j\textsuperscript{th} element can be computed as:
\begin{equation}
\begin{split}
\label{eqn:psi_ts}
\psi'_j = \frac{\partial L'}{\partial \bar{x_j}^K} = -(\delta_{kj} - p'_j), \qquad \delta_{kj} = \I[i=j],
\end{split}
\end{equation}
where $\bar{x}'_j$ is the j\textsuperscript{th} element of $\bar{\rvx}'$ and $p'_j$ is the j\textsuperscript{th} element of $\rvp'$, and $k$ is the index of the ground truth class of the input $\rvx_0$. 

So, this single scalar $\beta$ essentially allows to change the scale of the input-output Jacobian $\rmJ$ and error signal $\psi$ and therefore improve the overall signal propagation of the network which aids in better in gradient ascent attacks. 
\kartik{I dont understand what we can say about homogenous activation functions here and how they affect this scalar as in equation 11 this scalar is outside J and it does not matter whether $\phi$ is sigmoid or relu}
We would now discuss our proposed approach to choose this scalar $\beta$ such that we can improve the existing gradient attacks (\acrshort{FGSM}, \acrshort{PGD}).
}
To this end, we hope to obtain a $\beta$ that ensures the error signal is useful (\ie, not all zero) as well as the Jacobian is well-conditioned to allow the error signal to propagate to the input.

We acknowledge that while one can find a $\beta>0$ to obtain softmax output ranging from a uniform distribution ($\beta=0$) to one-hot vectors ($\beta\to\infty$), $\beta$ only scales the Jacobian. Therefore, if the Jacobian $\rmJ$ has zero singular values, our approach has no effect in those dimensions. 
However, since most of the modern networks consist of ReLU nonlinearities (generally positive homogeneous functions), the effect of a single scalar would be equivalent (ignoring the biases) to having layerwise scalars such as in~\cite{mcdonnell2018training}.
Thus, we believe a single scalar is sufficient for our purpose.

\section{Improved Gradients for Adversarial Attacks}
\SKIP{
\begin{wrapfigure}{r}{5.5cm}
    \vspace{-5ex}
    \includegraphics[width=0.4\textwidth]{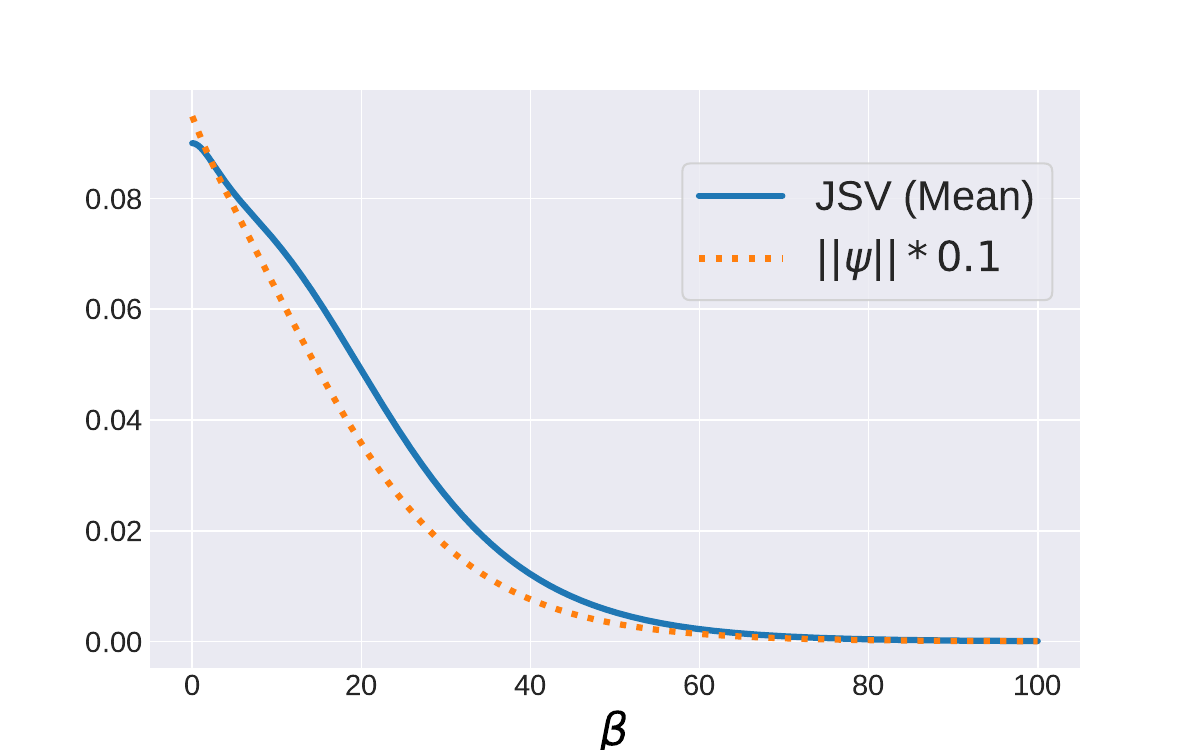}
    \vspace{-4ex}
    \caption{\em Error signal $(\psi(\beta))$ and Jacobian of softmax $(\partial \rvp(\beta)/\partial \bar{\rva}^K)$ \vs $\beta$ for a random correctly classified logits. 
    }
\label{fig:beta_vs_jacpsi}
    \vspace{-2.5ex}
\end{wrapfigure}
}

\SKIP{
\begin{figure}[!t]
    \centering
    \includegraphics[width=0.4\textwidth]{figures/psijsvsoft_vs_beta.pdf}
    \caption{\em Error signal $(\psi(\beta))$ and Jacobian of softmax $(\partial \rvp(\beta)/\partial \bar{\rva}^K)$ \vs $\beta$ for a random correctly classified logits. 
    }
\label{fig:beta_vs_jacpsi}
\end{figure}

}
Now we discuss strategies to choose a scalar $\beta$ such that the gradients with respect to input are informative. 
Let us first analyze the effect of $\beta$ on the error signal.
To this end, 
\begin{equation}
\begin{split}
\label{eqn:psi_breakdown}
\psi(\beta) = \frac{\partial \ell(\beta)}{\partial \rvp(\beta)} \frac{\partial \rvp(\beta)}{\partial \bar{\rva}^K} = 
-(\bfy - \bfp(\beta))^T\ .
\end{split}
\end{equation}
where $\bfy$ is the one-hot encoded target label, and $\bfp(\beta)$ is the softmax output of scaled logits.

For adversarial attacks, we only consider the correctly classified images (\ie, $\argmax_j y_j = \argmax_j p_j(\beta)$) as there is no need to generate adversarial examples corresponding to misclassified samples.
From the above formula, it is clear that when $\bfp(\beta)$ is one-hot encoding then the error signal is $\bfzero$.
This is one of the reason for vanishing gradient issue in \acrshort{BNN}s. 
Even if this does not happen for a given image, one can increase $\beta\to \infty$ to make this error signal $\bfzero$.
Similarly, when $\bfp(\beta)$ is the uniform distribution, the norm of the error signal is at the maximum.
This can be obtained by setting $\beta=0$. 
However, this would also make $\partial \ell(\beta)/\partial \bfx^0 = \bfzero$ as the singular values of the input-output Jacobian would all be $0$. 

This analysis indicates that the optimal $\beta$ cannot be obtained by simply maximizing the norm of the error signal and we need to balance both the Jacobian as well as the error signal.
To summarize, the scalar $\beta$ should be chosen such that the following properties are satisfied:
\begin{tight_enumerate}
    \item $\|\psi(\beta)\|_2 > \rho$ for some $\rho>0$.
    \item The Jacobian $\beta\,\rmJ$ is well-conditioned, \ie, the singular values of $\beta\,\rmJ$ is concentrated around $1$.
\end{tight_enumerate}
\vspace{-2ex}
\subsubsection{\acrfull{NJS}.}

We now discuss a straightforward, two-step approach to attain the aforementioned properties.
Firstly, to ensure $\beta \rmJ$ is well-conditioned, we simply choose $\beta$ to be the inverse of the mean of singular values of $\rmJ$.
This guarantees that the mean of singular values of $\beta \rmJ$ is $1$. 

After this scaling, it is possible that the resulting error signal is very small. To ensure that $\|\psi(\beta)\|_2>\rho > 0$, we ensure that the softmax output $p_k(\beta)$ corresponding to the ground truth class $k$ is at least $\rho$ away from $1$.
We now state it as a proposition to derive $\beta$ given a lowerbound on $1-p_k(\beta)$.
\begin{pro}\label{thm:beta_cal_gt}
Let $\rva^K\in\R^d$ with $d >1$ and $a^K_1 \ge a^K_2 \ge \ldots \ge a^K_d$ and $a^K_1 - a^K_d = \gamma$. 
For a given ${0<\rho< (d-1)/d}$, there exists a $\beta > 0$ such that ${1- \softmax(\beta a^K_1)>\rho}$, then ${\beta < -\log(\rho/(d-1)(1-\rho))/\gamma}$. 
\end{pro}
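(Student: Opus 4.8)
The plan is to start from the inequality we wish to enforce, $1 - \softmax(\beta a^K_1) > \rho$, and massage it into an explicit upper bound on $\beta$. First I would rewrite it as $\softmax(\beta a^K_1) < 1-\rho$, expand the top-class probability as $\exp(\beta a^K_1)/\sum_{\lambda=1}^d \exp(\beta a^K_\lambda)$, and normalize by dividing numerator and denominator by $\exp(\beta a^K_1)$. This rewrites the probability as $1/\big(1 + \sum_{\lambda=2}^d \exp(\beta(a^K_\lambda - a^K_1))\big)$, isolating the dependence on the logit gaps $a^K_1 - a^K_\lambda$ and removing the awkward overall scale.

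The key step is to control these gaps uniformly. Since the logits are sorted and $a^K_1 - a^K_d = \gamma$, each gap satisfies $0 \le a^K_1 - a^K_\lambda \le \gamma$ for $\lambda > 1$, so $\exp(\beta(a^K_\lambda - a^K_1)) \ge \exp(-\beta\gamma)$. Summing the $d-1$ non-top terms lower bounds the denominator, which in turn upper bounds the probability by $1/\big(1 + (d-1)\exp(-\beta\gamma)\big)$. It then suffices to drive this worst-case bound below $1-\rho$: because the true probability never exceeds the bound, any $\beta$ making the bound small enough automatically satisfies $\softmax(\beta a^K_1) < 1-\rho$.

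Finally I would solve the sufficient condition $1/\big(1 + (d-1)\exp(-\beta\gamma)\big) < 1-\rho$ for $\beta$. Rearranging gives $\exp(-\beta\gamma) > \rho/\big((d-1)(1-\rho)\big)$, and applying the monotone logarithm and then dividing by $-\gamma < 0$ — which flips the inequality — yields the claimed bound $\beta < -\log\big(\rho/((d-1)(1-\rho))\big)/\gamma$ (here I tacitly use $\gamma > 0$, since if all logits coincide the softmax is uniform and the statement is vacuous).

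The point I would treat most carefully is the hypothesis $0 < \rho < (d-1)/d$, which is exactly what makes the construction feasible with a genuine $\beta > 0$. Indeed $\rho < (d-1)/d$ is equivalent to $d\rho < d-1$, hence to $\rho < (d-1)(1-\rho)$, so the argument $\rho/((d-1)(1-\rho))$ lies in $(0,1)$, its logarithm is negative, and the derived upper bound is therefore strictly positive. I do not anticipate a real obstacle — the derivation is a monotone chain of equivalences — but the two places to stay vigilant are the sign flip when dividing by $-\gamma$, and the fact that we are establishing a sufficient (worst-case) condition on $\beta$ rather than an exact characterization of the feasible set.
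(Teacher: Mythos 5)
Your proposal is correct and follows essentially the same route as the paper's own proof: rewrite the top-class softmax as $1/\bigl(1 + \sum_{\lambda=2}^d \exp(\beta(a^K_\lambda - a^K_1))\bigr)$, lower-bound each gap term by $\exp(-\beta\gamma)$ to get the worst-case bound $1/\bigl(1+(d-1)\exp(-\beta\gamma)\bigr)$, and solve the resulting sufficient condition for $\beta$. Your added remarks --- that $\rho < (d-1)/d$ makes the bound strictly positive and that the argument gives a sufficient rather than necessary condition --- are sound and slightly more explicit than the paper, but do not change the argument.
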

\vspace{-2.5ex}
\begin{proof}
This is derived via a simple algebraic manipulation of softmax. Please refer to Appendix.
\end{proof}
This $\beta$ can be used together with the one computed using inverse of mean \acrfull{JSV}. We provide the pseudocode for our proposed \acrshort{PGD++} (\acrshort{NJS}) attack in Section A of Appendix. Similar approach can also be applied for \acrshort{FGSM++}. Notice that, this approach is simple and it adds negligible overhead to the standard \acrshort{PGD} attacks. However, it has a hand-designed hyperparameter $\rho$. To mitigate this, next we discuss a hyperparameter-free approach to obtain $\beta$. 

\subsubsection{\acrfull{HNS}.}
We now discuss another approach to obtain informative gradients.
Our idea is to maximize the Frobenius norm of the Hessian of the loss with respect to the input, where the intuition is that if the Hessian norm is large, then the gradient $\partial \ell/\partial \bfx^0$ is sensitive to an infinitesimal change in $\bfx^0$.
This means, the infinitesimal perturbation in the input is propagated in the forward pass to the last layer and propagated back to the input layer without attenuation (\ie, the returned signal is not zero), assuming there are no randomized or non-differentiable components in the network.
This clearly indicates that the network has good signal propagation as well as the error signals are not all zero. 
This objective can now be written as:
\begin{equation}
\begin{split}
\label{eqn:gradthresh}
\beta^* &= \argmax_{\beta>0} \norm{\frac{\partial^2 \ell(\beta)}{\partial (\rvx^0)^2}}_F \\
&= \argmax_{\beta>0} \norm{\beta \left[\psi(\beta) \frac{\partial \rmJ}{\partial \rvx^0} + \beta \left(\frac{\partial \rvp(\beta)}{\partial \bar{\rva}^K} \rmJ \right)^T \rmJ\right]}_F\ .\\[-1ex]
\end{split}
\end{equation}

\SKIP{
\begin{figure}
    \centering
    \includegraphics[width=0.4\textwidth]{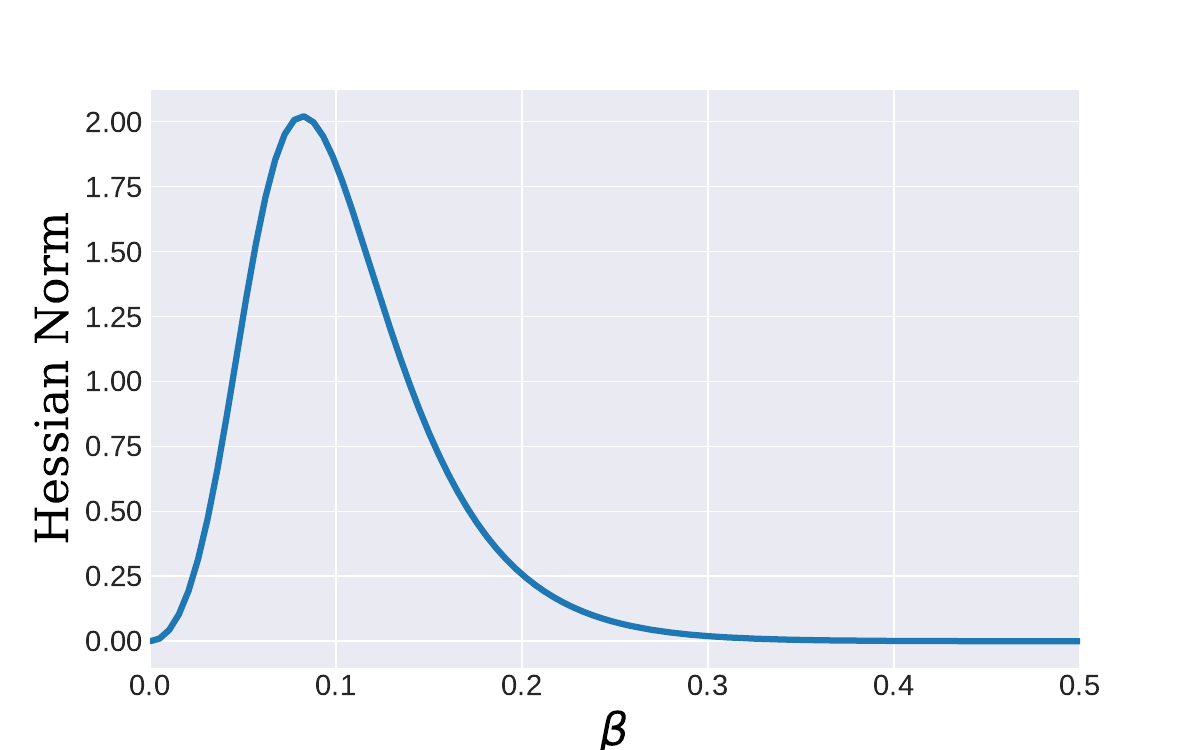}
    \caption{\em Hessian norm vs. $\beta$ on a random correctly classified image. The plot clearly shows a concave behaviour.
    }
\label{fig:beta_vs_HN}
\end{figure}
}
\SKIP{
\begin{wrapfigure}{r}{5.5cm}
    \vspace{-4ex}
    \includegraphics[width=0.4\textwidth]{figures/hessnorms_vs_beta.pdf}
    \vspace{-4.2ex}
    \caption{\em Hessian norm vs. $\beta$ on a random correctly classified image. The plot clearly shows a concave behaviour.
    }
\vspace{-2ex}
\label{fig:beta_vs_HN}
\end{wrapfigure}
}
The derivation is provided in Appendix.
Note, since $\rmJ$ does not depend on $\beta$, $\rmJ$ and $\partial \rmJ/\partial \rvx^0$ are computed only once, $\beta$ is optimized using grid search as it involves only a single scalar.
In fact, it is easy to see from the above equation that, when the Hessian is maximized, $\beta$ cannot be zero.
Similarly, $\psi(\beta)$ cannot be zero because if it is zero, then the prediction $\rvp(\beta)$ is one-hot encoding (\eqref{eqn:psi_breakdown}), consequently $\partial \bfp(\beta)/\partial \bar{\bfa}^K = \bfzero$ and this cannot be a maximum for the Hessian norm.
Hence, this ensures that $\|\psi(\beta^*)\|_2 >\rho$ for some $\rho >0$ and $\beta^*$ is bounded according to \proref{thm:beta_cal_gt}.
Therefore, the maximum is obtained for a finite value of $\beta$. 
Even though, it is not clear how exactly this approach would affect the singular values of the input-output Jacobian ($\beta\,\rmJ$), we know that they are finite and not zero.

Furthermore, there are some recent works~\citenew{moosavi2019robustness,qin2019adversarial} show that adversarial training makes the loss surface locally linear around the vicinity of training samples and enforcing local linearity constraint on loss curvature can achieve better robust to adversarial attacks. 
On the contrary, our idea of maximizing the Hessian, \ie, increasing the nonlinearity of $\ell$, could make the network more prone to adversarial attacks and we intend to exploit that. The psuedocode for \acrshort{PGD++} attack with \acrshort{HNS} is summarized in Section A of Appendix.


\section{Experiments}
We evaluate robustness accuracies of \acrshort{BNN}s with weight quantized (\acrshort{BNN-WQ}), weight and activation quantized (\acrshort{BNN-WAQ}), floating point networks (\acrshort{REF}).
We evaluate our two \acrshort{PGD}++ variants corresponding to \acrshort{HNS} and \acrshort{NJS} on \cifar{-10} and \cifar{-100} datasets with multiple network architectures. In order to demonstrate the effectiveness of our proposed variants on adversarially robust models, we also performed comparisons against  stronger attacks such as DeepFool~\citenew{moosavi2016deepfool} and \gls{BBA}~\citenew{brendel2019accurate} on adversarially trained \acrshort{REF} and \acrshort{BNN-WQ}. We further provide experimental comparisons against more recent gradient based/free attacks (Auto-PGD~\cite{croce2020reliable}, Square Attack~\cite{andriushchenko2020square}) proposed to alleviate the issue of gradient obfuscation. More analysis on signal propagation issue in \bnns{} and our variants success in improving it is provided in Section C.5 of Appendix.
\SKIP{
\begin{table*}[]
\centering

\begin{tabular}{lcccccccc}
\toprule[2pt]
\multirow{3}{*}{\textbf{Dataset}} & \multirow{3}{*}{\textbf{Network}} & \textbf{Clean} & \multicolumn{6}{c}{\textbf{Adversarial Accuracy (\%)}} \\
\cmidrule[1pt]{4-9}
 &  & \textbf{Accuracy (\%)} & \multirow{2}{*}{FGSM} & \multicolumn{2}{c}{FGSM++} & \multirow{2}{*}{PGD} & \multicolumn{2}{c}{PGD++} \\ 
 &  &  &  & NJS & HNS & & NJS & HNS \\
 \midrule[1pt]
\multirow{5}{*}{CIFAR-10} & ResNet-18 & 93.18 & 40.49 & 3.46 & \textbf{2.51} & 26.98 & \textbf{0.0} & \textbf{0.0} \\ 
 & VGG-16 & 91.47 & 57.55 & 4.00 & \textbf{3.43} & 47.32 & \textbf{0.0} & \textbf{0.0} \\  
 & ResNet-50 & 92.47 & 57.62 & 6.44 & \textbf{5.35} & 43.14 & \textbf{0.0} & \textbf{0.0} \\ 
 & DenseNet-121 & 93.27 & 26.80 & 4.67 & \textbf{4.24} & 9.11 & \textbf{0.0} & \textbf{0.0} \\  
 & MobileNet-V2 & 89.99 & 33.50 & 6.42 & \textbf{5.42} & 26.86 & \textbf{0.0} & \textbf{0.0} \\ 
\midrule
\multirow{5}{*}{CIFAR-100} & ResNet-18 & 72.18 & 25.22 & 14.08 & \textbf{1.80} & 8.23 & 2.45 & \textbf{0.0} \\  
 & VGG-16 & 61.69 & 19.82 & 7.98 & \textbf{1.76} & 17.44 & 0.88 & \textbf{0.16} \\ & ResNet-50 & 65.77 & 37.76 & 16.33 & \textbf{14.17} & 25.71 & \textbf{2.33} & 2.73 \\ & DenseNet-121 & 73.31 & 28.32 & 12.21 & \textbf{10.86} & 8.87 & 1.15 & \textbf{1.09} \\ & MobileNet-V2 & 66.63 & 12.09 & 10.18 & \textbf{8.79} & 1.44 & \textbf{0.57} & 0.66 \\ \bottomrule[2pt]
\end{tabular}
\caption{\em Adversarial accuracy on the test set for \acrfull{BWN} (only parameters binarized) obtained using \acrshort{MD}-$\tanh$-\acrshort{S} \citenew{ajanthan2019mirror} for quantization. Both our \acrshort{NJS} and \acrshort{HNS} variants consistently outperform original $L_\infty$ bounded \acrshort{FGSM} attack and \acrshort{PGD} attack.
}
\label{tab:BNN-WQ_mpgdlinf}

\end{table*}
}

\begin{table*}[!t]
\centering
\small
\begin{tabular}{cl|ccc|ccc|ccc}
\toprule
 & \multirow{3}{*}{\textbf{Network}} & \multicolumn{9}{c}{\textbf{Adversarial Accuracy (\%)}} \\
\cmidrule{3-11}
 &  & \multirow{2}{*}{\textbf{\acrshort{FGSM}}} & \multicolumn{2}{c|}{\textbf{\acrshort{FGSM++}}} & \multirow{2}{*}{\textbf{\acrshort{PGD}} ($L_\infty$)} & \multicolumn{2}{c|}{\textbf{\acrshort{PGD++}} ($L_\infty$)} & \multirow{2}{*}{\textbf{\acrshort{PGD}} ($L_2$)} & \multicolumn{2}{c}{\textbf{\acrshort{PGD++}} ($L_2$)} \\ 
 &  &  & \textbf{\acrshort{NJS}} & \textbf{\acrshort{HNS}} & & \textbf{\acrshort{NJS}} & \textbf{\acrshort{HNS}} & & \textbf{\acrshort{NJS}} & \textbf{\acrshort{HNS}} \\
 \midrule

\multirow{5}{*}{\rotatebox[origin=c]{90}{\textbf{CIFAR-10}}} & ResNet-18 & 40.49 & 3.46 & \textbf{2.51} & 26.98 & \textbf{0.00} & \textbf{0.00} & 55.68 & \textbf{0.05} & \textbf{0.05} \\  
& VGG-16 & 57.55 & 4.00 & \textbf{3.43} & 47.32 & \textbf{0.00} & \textbf{0.00} & 56.66 & \textbf{0.35} & 1.32 \\  
& ResNet-50 & 57.62 & 6.44 & \textbf{5.35} & 43.14 & \textbf{0.00} & \textbf{0.00} & 59.11 & 0.11 & \textbf{0.08} \\  
& DenseNet-121 & 26.80 & 4.67 & \textbf{4.24} & 9.11 & \textbf{0.00} & \textbf{0.00} & 45.78 & \textbf{0.03} & 0.06 \\  
& MobileNet-V2 & 33.50 & 6.42 & \textbf{5.42} & 26.86 & \textbf{0.00} & \textbf{0.00} & 34.40 & 0.12 & \textbf{0.09} \\ \midrule

\multirow{5}{*}{\rotatebox[origin=c]{90}{\textbf{CIFAR-100}}} & ResNet-18 & 25.22 & 14.08 & \textbf{1.80} & 8.23 & 2.45 & \textbf{0.00} & 25.20 & 6.79 & \textbf{0.26} \\ 
& VGG-16 & 19.82 & 7.98 & \textbf{1.76} & 17.44 & 0.88 & \textbf{0.16} & 16.25 & 3.17 & \textbf{0.63} \\ 
& ResNet-50 & 37.76 & 16.33 & \textbf{14.17} & 25.71 & \textbf{2.33} & 2.73 & 30.77 & 7.90 & \textbf{7.41} \\ 
& DenseNet-121 & 28.32 & 12.21 & \textbf{10.86} & 8.87 & 1.15 & \textbf{1.09} & 24.65 & 4.54 & \textbf{4.16} \\  
& MobileNet-V2 & 12.09 & 10.18 & \textbf{8.79} & 1.44 & \textbf{0.57} & 0.66 & 6.12 & 3.39 & \textbf{3.01} \\
\bottomrule
\end{tabular}
\caption{\em Adversarial accuracy on the test set for \acrshort{BNN-WQ}. Both our \acrshort{NJS} and \acrshort{HNS} variants consistently outperform original $L_\infty$ bounded \acrshort{FGSM} and \acrshort{PGD} attack, and $L_2$ bounded \acrshort{PGD} attack.}
\label{tab:bwn_mpgdlinf}
\end{table*} 

\SKIP{
\begin{table*}[]
\centering
\small
\begin{tabular}{cl|ccc|ccc|ccc}
\toprule[1.5pt]
 & \multirow{3}{*}{\textbf{Network}} & \multicolumn{9}{c}{\textbf{Adversarial Accuracy (\%)}} \\
\cmidrule[1pt]{3-11}
 &  & \multirow{2}{*}{\textbf{\acrshort{FGSM}}} & \multicolumn{2}{c|}{\textbf{\acrshort{FGSM++}}} & \multirow{2}{*}{\textbf{\acrshort{PGD}} ($L_\infty$)} & \multicolumn{2}{c|}{\textbf{\acrshort{PGD++}} ($L_\infty$)} & \multirow{2}{*}{\textbf{\acrshort{PGD}} ($L_2$)} & \multicolumn{2}{c}{\textbf{\acrshort{PGD++}} ($L_2$)} \\ 
 &  &  & \textbf{\acrshort{NJS}} & \textbf{\acrshort{HNS}} & & \textbf{\acrshort{NJS}} & \textbf{\acrshort{HNS}} & & \textbf{\acrshort{NJS}} & \textbf{\acrshort{HNS}} \\
 \midrule[1pt]

\multirow{4}{*}{\rotatebox[origin=c]{90}{\textbf{\acrshort{REF}}}} & ResNet-18 & 7.62 & 5.55 & \textbf{5.35} & \textbf{0.00} & \textbf{0.00} & \textbf{0.00} & 45.18 & 0.09 & \textbf{0.05} \\ 
 & VGG-16 & 11.01 & 10.04 & \textbf{9.66} & 0.04 & \textbf{0.00} & \textbf{0.00} & 2.23 & \textbf{0.78} & 1.10 \\ 
 & ResNet-50 & 21.64 & 6.08 & \textbf{5.70} & 0.69 & \textbf{0.00} & \textbf{0.00} & 65.56 & \textbf{0.07} & 0.09 \\ 
 & DenseNet-121 & 11.40 & 7.58 & \textbf{7.30} & \textbf{0.00} & \textbf{0.00} & \textbf{0.00} & 38.15 & 0.08 & \textbf{0.06} \\ \midrule
\multirow{4}{*}{\rotatebox[origin=c]{90}{\textbf{\acrshort{BNN-WAQ}}}} & ResNet-18 & 40.84 & 19.46 & \textbf{19.09} & 8.57 & \textbf{0.03} & 0.04 & 67.84 & \textbf{2.33} & 2.59 \\  
 & VGG-16 & 79.92 & 15.96 & \textbf{15.39} & 78.01 & \textbf{0.01} & 0.02 & 85.62 & \textbf{0.49} & 0.62 \\  
 & ResNet-50 & 33.16 & \textbf{25.89} & 27.05 & 0.49 & \textbf{0.23} & 0.45 & 32.93 & \textbf{6.68} & 8.77 \\  
 & DenseNet-121 & 37.20 & \textbf{23.89} & 24.69 & 0.81 & \textbf{0.10} & 0.18 & 59.32 & \textbf{3.72} & 6.17 \\ 

\bottomrule[1.5pt]
\end{tabular}
\caption{\em Adversarial accuracy on the test set of \cifar{-10} for \acrshort{REF} and \acrshort{BNN-WAQ}. 
Both our \acrshort{NJS} and \acrshort{HNS} variants consistently outperform original \acrshort{FGSM} and \acrshort{PGD} ($L_\infty/L_2$ bounded) attacks.}
\label{tab:ref_bnnwaq_mpgdlinf}
\end{table*}
}

\begin{table*}
\centering
\small
\begin{tabular}{cl|ccc|ccc|ccc}
\toprule
 & \multirow{3}{*}{\textbf{Network}} & \multicolumn{9}{c}{\textbf{Adversarial Accuracy (\%)}} \\
\cmidrule{3-11}
 &  & \multirow{2}{*}{\textbf{\acrshort{FGSM}}} & \multicolumn{2}{c|}{\textbf{\acrshort{FGSM++}}} & \multirow{2}{*}{\textbf{\acrshort{PGD}} ($L_\infty$)} & \multicolumn{2}{c|}{\textbf{\acrshort{PGD++}} ($L_\infty$)} & \multirow{2}{*}{\textbf{\acrshort{PGD}} ($L_2$)} & \multicolumn{2}{c}{\textbf{\acrshort{PGD++}} ($L_2$)} \\ 
 &  &  & \textbf{\acrshort{NJS}} & \textbf{\acrshort{HNS}} & & \textbf{\acrshort{NJS}} & \textbf{\acrshort{HNS}} & & \textbf{\acrshort{NJS}} & \textbf{\acrshort{HNS}} \\
 \midrule

\multirow{4}{*}{\rotatebox[origin=c]{90}{\textbf{\acrshort{REF}}}} & ResNet-18 & 7.62 & 5.55 & \textbf{5.35} & \textbf{0.00} & \textbf{0.00} & \textbf{0.00} & 1.12 & 0.09 & \textbf{0.05} \\ 
 & VGG-16 & 11.01 & 10.04 & \textbf{9.66} & 0.04 & \textbf{0.00} & \textbf{0.00} & 2.23 & \textbf{0.78} & 1.10 \\ 
 & ResNet-50 & 21.64 & 6.08 & \textbf{5.70} & 0.69 & \textbf{0.00} & \textbf{0.00} & 0.37 & \textbf{0.07} & 0.09 \\ 
 & DenseNet-121 & 11.40 & 7.58 & \textbf{7.30} & \textbf{0.00} & \textbf{0.00} & \textbf{0.00} & 0.65 & 0.08 & \textbf{0.06} \\ \midrule
\multirow{4}{*}{\rotatebox[origin=c]{90}{\textbf{\acrshort{BNN-WAQ}}}} & ResNet-18 & 40.84 & 19.46 & \textbf{19.09} & 8.57 & \textbf{0.03} & 0.04 & 25.24 & \textbf{2.33} & 2.59 \\  
 & VGG-16 & 79.92 & 15.96 & \textbf{15.39} & 78.01 & \textbf{0.01} & 0.02 & 85.62 & \textbf{0.49} & 0.62 \\  
 & ResNet-50 & 33.16 & \textbf{25.89} & 27.05 & 0.49 & \textbf{0.23} & 0.45 & 19.41 & \textbf{6.68} & 8.77 \\  
 & DenseNet-121 & 37.20 & \textbf{23.89} & 24.69 & 0.81 & \textbf{0.10} & 0.18 & 48.37 & \textbf{3.72} & 6.17 \\ 

\bottomrule
\end{tabular}
\caption{\em Adversarial accuracy on the test set of \cifar{-10} for \acrshort{REF} and \acrshort{BNN-WAQ}. 
Both our \acrshort{NJS} and \acrshort{HNS} variants consistently outperform original \acrshort{FGSM} and \acrshort{PGD} ($L_\infty/L_2$ bounded) attacks.}
\label{tab:ref_bnnwaq_mpgdlinf}
\end{table*}

\SKIP{
Briefly, our results indicate that both of our proposed attack variants yield attack success rate much higher than original \acrshort{PGD} attacks not only on $L_\infty$ bounded attack but also on $L_2$ bounded attacks on both floating point networks and binarized networks. Our proposed \acrshort{PGD++} variants also reduce \acrshort{PGD} adversarial accuracy of adversarially trained floating point and adversarially trained binarized neural networks while outperforming much stronger attacks such as DeepFool~\citenew{moosavi2016deepfool} and \gls{BBA}~\citenew{brendel2019accurate}.
 Among our variants, even though they perform similarly in our experiments, Hessian based scaling (\acrshort{HNS}) outperforms Jacobian based scaling (\acrshort{NJS}) in majority of the cases and this difference is significant for one step \acrshort{FGSM} attacks. This indicates that nonlinearity of the network indeed has some relationship to its adversarial robustness.
 }
\SKIP{  
\begin{table*}[!t]
\centering
\scriptsize

\begin{tabular}{ccccc}
\centering
\begin{tabular}{l|ccc}
\toprule
 \multirow{2}{*}{Methods} & \multirow{2}{*}{\acrshort{FGSM}} & \multicolumn{2}{c}{ \acrshort{FGSM++}} \\
 & & \acrshort{HNS} & \acrshort{NJS} \\
 
\midrule

\acrshort{REF} & 62.38 & \textbf{61.40} & 61.43 \\
\acrshort{BC}  & 53.91 & \textbf{52.27} & 52.90 \\
\acrshort{GD}-$\tanh$  & 56.13 & \textbf{54.81} & 55.54 \\
\acrshort{MD}-$\tanh$-\acrshort{S} & 55.10 & \textbf{53.82} & 54.74 \\
\bottomrule
\end{tabular}
&
&
&
\begin{tabular}{l|ccc}
\toprule
 \multirow{2}{*}{Methods} & \multirow{2}{*}{\acrshort{PGD}} & \multicolumn{2}{c}{ \acrshort{PGD++}} \\
 & & \acrshort{HNS} & \acrshort{NJS} \\

\midrule

\acrshort{REF} & 48.73 & 48.54 & \textbf{47.17} \\
\acrshort{BC}  & 41.29 & \textbf{39.34} & 39.35 \\
\acrshort{GD}-$\tanh$  & 42.77 & 42.30 & \textbf{42.14} \\
\acrshort{MD}-$\tanh$-\acrshort{S} & 41.34 & \textbf{40.67} & 40.76 \\
\bottomrule
\end{tabular}\\

\it (a) & 
&
    &
\it (b) 
\end{tabular}\\
\vspace{1mm}

\caption{\em Adversarial accuracy on the test set for adversarially trained floating and binary neural networks using different methods for quantization using (a) Original $L_\infty$ bounded \acrshort{FGSM} attack and \acrshort{FGSM++} attack with \acrshort{NJS} and \acrshort{HNS} and (b) Original $L_\infty$ bounded \acrshort{PGD} attack and \acrshort{PGD++} attack with \acrshort{NJS} and \acrshort{HNS}.}
\label{table:advtrain_++}
\end{table*}
}

\SKIP{
\begin{table*}[t]
\centering
\small

\begin{tabular}{l|cccc|cccccc}
\toprule[1.5pt]
 \multirow{3}{*}{\textbf{Network}} & \multicolumn{10}{c}{\textbf{Adversarial Accuracy (\%)}} \\
\cmidrule[1pt]{2-11}
  & \multirow{2}{*}{\textbf{\acrshort{FGSM}}} & \textbf{\acrshort{FGSM}} & \multicolumn{2}{c|}{ \textbf{\acrshort{FGSM++}}} & \multirow{2}{*}{\textbf{\acrshort{PGD}}} & \textbf{\acrshort{PGD}} & \textbf{Deep} & \textbf{\acrshort{BBA}} & \multicolumn{2}{c}{ \textbf{\acrshort{PGD++}}} \\
 & & $\beta=0.1$ & \textbf{\acrshort{NJS}} & \textbf{\acrshort{HNS}} & & $\beta=0.1$ & \textbf{Fool} &  & \textbf{\acrshort{NJS}} & \textbf{\acrshort{HNS}} \\
 
 \midrule[1pt]

\acrshort{REF} & 62.38 & 69.52 & 61.43 & \textbf{61.40} & 48.73 & 61.27 & 51.01 & 48.43 & \textbf{47.17} & 48.54 \\
\acrshort{BC} & 53.91 & 62.46 & 52.90 & \textbf{52.27} & 41.29 & 54.24 & 42.65 & 40.14 & 39.35 & \textbf{39.34} \\
\acrshort{GD}-$\tanh$ & 56.13 & 65.06 & 55.54 & \textbf{54.81} & 42.77 & 56.78 & 44.78 & 42.94 & \textbf{42.14} & 42.30 \\
\acrshort{MD}-$\tanh$-\acrshort{S} & 55.10 & 63.42 & 54.74 & \textbf{53.82} & 41.34 & 54.22 & 43.46 & 40.69 & 40.76 & \textbf{40.67} \\
\bottomrule[1.5pt]
\end{tabular}
\vspace{-1ex}
\caption{\em Adversarial accuracy on the test set of \cifar{-10} with \sresnet{-18} for adversarially trained \acrshort{REF} and \acrshort{BNN-WQ} using different quantization methods (\acrshort{BC}, \acrshort{GD}-$\tanh$, \acrshort{MD}-$\tanh$-\acrshort{S}). Our improved attacks are compared against \acrshort{FGSM}, $L_\infty$ bounded \acrshort{PGD}, a heuristic choice of $\beta=0.1$, DeepFool and \acrshort{BBA}.
Albeit on adversarially trained networks, our methods outperform all the comparable methods.
}
\label{table:advtrain_++}
\end{table*}
}

\begin{table*}[t]
\centering
\small
\begin{tabular}{l|cccc|cccccc}
\toprule
 \multirow{3}{*}{\textbf{Network}} & \multicolumn{10}{c}{\textbf{Adversarial Accuracy (\%)}} \\
\cmidrule{2-11}
  & \multirow{2}{*}{\textbf{\acrshort{FGSM}}} & \textbf{\acrshort{FGSM}} & \multicolumn{2}{c|}{ \textbf{\acrshort{FGSM++}}} & \multirow{2}{*}{\textbf{\acrshort{PGD}}} & \textbf{\acrshort{PGD}} & \textbf{Deep} & \textbf{\acrshort{BBA}} & \multicolumn{2}{c}{ \textbf{\acrshort{PGD++}}} \\
 & & $\beta=0.1$ & \textbf{\acrshort{NJS}} & \textbf{\acrshort{HNS}} & & $\beta=0.1$ & \textbf{Fool} &  & \textbf{\acrshort{NJS}} & \textbf{\acrshort{HNS}} \\
 
 \midrule

\acrshort{REF} & 62.38 & 69.52 & 61.43 & \textbf{61.40} & 48.73 & 61.27 & 51.01 & 48.43 & \textbf{47.17} & 48.54 \\
\acrshort{BC} & 53.91 & 62.46 & 52.90 & \textbf{52.27} & 41.29 & 54.24 & 42.65 & 40.14 & 39.35 & \textbf{39.34} \\
\acrshort{GD}-$\tanh$ & 56.13 & 65.06 & 55.54 & \textbf{54.81} & 42.77 & 56.78 & 44.78 & 42.94 & \textbf{42.14} & 42.30 \\
\acrshort{MD}-$\tanh$-\acrshort{S} & 55.10 & 63.42 & 54.74 & \textbf{53.82} & 41.34 & 54.22 & 43.46 & 40.69 & 40.76 & \textbf{40.67} \\
\bottomrule
\end{tabular}
\caption{\em Adversarial accuracy on \cifar{-10} with \sresnet{-18} for adversarially trained \acrshort{REF} and \acrshort{BNN-WQ} using different quantization methods (\acrshort{BC}, \acrshort{GD}-$\tanh$, \acrshort{MD}-$\tanh$-\acrshort{S}). Our improved attacks are compared against \acrshort{FGSM}, \acrshort{PGD} ($L_\infty$), a heuristic choice of $\beta=0.1$, DeepFool and \acrshort{BBA}.
Albeit on adversarially trained networks, our methods outperform all the comparable methods.
}
\label{table:advtrain_++}
\end{table*}

 We use state of the art models trained for binary quantization (where all layers are quantized) for our experimental evaluations. We provide adversarial attack parameters used for \acrshort{FGSM}/\acrshort{PGD} in \tabref{tab:attack_params} of Appendix and for other attacks, we use default parameters used in Foolbox~\citenew{rauber2017foolbox}. We also provide some other experimental comparisons such as comparisons against combinatorial attack proposed in~\cite{khalil2018combinatorial} in the Appendix. For our \acrshort{HNS} variant, we sweep $\beta$ from a range such that the hessian norm is maximized for each image, as explained in Appendix. For our \acrshort{NJS} variant, we set the value of $\rho=0.01$. In fact, our attacks are not very sensitive to $\rho$ and we provide the ablation study in the Appendix. 

\subsection{Results}

Our comparisons against the original \acrshort{PGD} ($L_2$/$L_\infty$) and \acrshort{FGSM} attack for different \acrshort{BNN-WQ} are reported in~\tabref{tab:bwn_mpgdlinf}.
Our \acrshort{PGD++} variants consistently outperform original \acrshort{PGD} on all networks on both datasets. Even being a gradient based attack, our proposed \acrshort{PGD++} ($L_2$/$L_\infty$) variants can in fact reach adversarial accuracy close to $0$ on \cifar{-10} dataset, demystifying the fake robustness binarized networks tend to exhibit due to poor signal propagation. 

\begin{table*}
\centering
\small

\begin{tabular}{l|cccc|cccccc}
\toprule
 \multirow{3}{*}{\textbf{Network}} & \multicolumn{10}{c}{\textbf{Adversarial Accuracy (\%)}} \\
\cmidrule{2-11}
 & \multirow{2}{*}{\textbf{\acrshort{FGSM}}} & \textbf{\acrshort{FGSM}} & \multicolumn{2}{c|}{\textbf{\acrshort{FGSM++}}} & \multirow{2}{*}{\textbf{\acrshort{PGD}}} & \textbf{\acrshort{PGD}} & \multirow{2}{*}{\textbf{\acrshort{APGD}}} &
 \textbf{Square} &
 \multicolumn{2}{c}{\textbf{\acrshort{PGD++}}} \\ 
 &  & \textbf{(\acrshort{DLR})} & \textbf{\acrshort{NJS}} & \textbf{\acrshort{HNS}} &  & \textbf{(\acrshort{DLR})} & & \textbf{Attack} & \textbf{\acrshort{NJS}} & \textbf{\acrshort{HNS}} \\
 \midrule
 \acrshort{REF} & 7.62 & 19.48 & 5.55 & \textbf{5.35} & \textbf{0.00} & \textbf{0.00} & \textbf{0.00} & 0.55 & \textbf{0.00} & \textbf{0.00} \\  
 \acrshort{BNN-WQ} & 40.49 & 19.72 & 3.46 & \textbf{2.51} & 26.98 & \textbf{0.00} & \textbf{0.00} & 0.41 & \textbf{0.00} & \textbf{0.00} \\  
 \acrshort{BNN-WAQ} & 40.84 & 41.78 & 19.46 & \textbf{19.09} & 8.57 & 4.57 & 6.32 & 21.45 & \textbf{0.03} & 0.04 \\ 
 \midrule
 \acrshort{REF}$^*$ & 62.38 & 66.39 & 61.43 & \textbf{61.40} & 48.73 & 49.73 & 49.00 & 54.05 & \textbf{47.17} & 48.54\\  
 \acrshort{BNN-WQ}$^*$ & 55.10 & 59.14 & 54.74 & \textbf{53.82} & 41.34 & 41.42 & 40.85 & 46.67 & 40.76 & \textbf{40.67} \\  
\bottomrule
\end{tabular}
\caption{\em Adversarial accuracy for \acrshort{REF}, \acrshort{BNN-WQ}, and \acrshort{BNN-WAQ} trained on \cifar{-10} using \sresnet{-18}. Here $^*$ denotes adversarially trained models. Both our \acrshort{NJS} and \acrshort{HNS} variants consistently outperform $L_\infty$ bounded \acrshort{FGSM}, \acrshort{PGD} and \acrfull{APGD}~\citenew{croce2020reliable} attack performed using \acrfull{DLR} loss and a gradient free attack namely, Square Attack~\citenew{andriushchenko2020square} under $L_\infty$ bound (8/255). Notice, \acrshort{FGSM}, \acrshort{PGD} and \acrshort{APGD} attack with \acrshort{DLR} loss and Square Attack perform even worse than their original form on adversarially trained models in most cases.
}
\label{tab:compare_dlr}
\end{table*}

Similarly, for one step \acrshort{FGSM} attack, our modified versions outperform original \acrshort{FGSM} attacks by a significant margin consistently for both datasets on various network architectures. We would like to point out such an improvement in the above two attacks is considerably interesting, knowing the fact that \acrshort{FGSM}, \acrshort{PGD} with $L_\infty$ attacks only use the sign of the gradients so improved performance indicates, our temperature scaling indeed makes some zero elements in the gradient nonzero. We would like to point out here that one can use several random restarts to increase the success rate of original form of \acrshort{FGSM}/\acrshort{PGD} attack further but to keep comparisons fair we use single random restart for both original and modified attacks. Nevertheless, as it has been observed in \tabref{tab:adv_rob_bnn_cifar} even with 20 random restarts \acrshort{PGD} adversarial accuracies for \bnns{} cannot reach zero, whereas our proposed \acrshort{PGD++} variants consistently achieve perfect success rate.

The adversarial accuracies of \acrshort{REF} and \acrshort{BNN-WAQ} trained on \cifar{-10} using \sresnet{-18/50}, \svgg{-16} and \sdensenet{-121} for our variants against original counterparts are reported in \tabref{tab:ref_bnnwaq_mpgdlinf}. Overall, for both \acrshort{REF} and \acrshort{BNN-WAQ}, our variants outperform the original counterparts consistently. Particularly interesting, \acrshort{PGD++} variants improve the attack success rate on \acrshort{REF} networks. This effectively expands the applicability of our \acrshort{PGD++} variants and encourages to consider signal propagation of any trained network to improve gradient based attacks.
\acrshort{PGD++} with $L_\infty$ variants achieve near-perfect success rate on all \acrshort{BNN-WAQ}s, again validating the hypotheses of fake robustness of \acrshort{BNN}s.

\vspace{-0.4ex}
To further demonstrate the efficacy of proposed attack variants, we first adversarially trained the \acrshort{BNN-WQ}s (quantized using \acrshort{BC}~\citenew{courbariaux2015binaryconnect}, \acrshort{GD}-$\tanh$/\acrshort{MD}-$\tanh$-\acrshort{S}~\citenew{ajanthan2019mirror}) and floating point networks in a similar manner as in~\cite{madry2017towards}, using $L_\infty$ bounded \acrshort{PGD} with $T=7$ iterations, $\eta=2$ and $\epsilon=8$. 
We report the adversarial accuracies of $L_\infty$ bounded attacks and our variants on \cifar{-10} using \sresnet{-18} in~\tabref{table:advtrain_++}. These results further strengthens the usefulness of our proposed \acrshort{PGD++} variants. 
Moreover, with a heuristic choice of $\beta=0.1$ to scale down the logits before performing gradient based attacks performs even worse. 
Finally, even against stronger attacks (DeepFool, \acrshort{BBA}) under the same $L_\infty$ perturbation bound, our variants outperform consistently on these adversarially trained models in~\tabref{table:advtrain_++}. 
We would like to point out that our variants have negligible computational overhead over the original gradient based attacks, whereas stronger attacks are much slower in practice requiring 100-1000 iterations with an adversarial starting point (instead of random initial perturbation).

To illustrate the effectiveness of our proposed variants in improving signal propagation, we compare against gradient based attacks performed using recently proposed \acrfull{DLR} loss~\citenew{croce2020reliable} that aims to avoid the issue of saturating error signals. Also, we provide comparisons against recently introduced \acrfull{APGD} attack performed using \acrshort{DLR} loss and a gradient free attack namely, Square Attack~\citenew{andriushchenko2020square}. We show these experimental comparisons performed on \sresnet{-18} models trained on \cifar{-10} dataset in \tabref{tab:compare_dlr}. The attack parameters are same as used for the other experiments. It can be observed that our proposed variants perform better than both \acrshort{PGD} or \acrshort{APGD} with \acrshort{DLR} loss and Square Attack, consistently achieving 0\% adversarial accuracy. Infact, much computationally expensive Square attack is unable to achieve 0\% adversarial accuracy in any of the cases under the enforced $L_\infty$ bound.
The margin of difference is significant in case of \acrshort{FGSM} attack and adversarial trained models. Infact, it is important to note that gradient based attacks with \acrshort{DLR} loss and Square Attack perform worse on adversarially trained models than the original gradient based attacks.

\vspace{-0.33ex}
\paragraph{ImageNet.} For other large scale datasets such as ImageNet, BNNs are hard to train with full binarization of parameters and result in poor performance. Thus, most existing works~\citenew{yang2019quantization} on \bnns~keep the first and the last layers floating point and introduce several layerwise scalars to achieve good results on ImageNet. In such experimental setups, according to our experiments, trained BNNs do not exhibit gradient masking issues or poor signal propagation and thus are easier to attack using original \acrshort{FGSM}/\acrshort{PGD} attacks with complete success rate. In such experiments, our modified versions perform equally well compared to the original forms of these attacks.

\section{Related Work} 

Adversarial examples are first observed in~\cite{szegedy2013intriguing} and subsequently efficient gradient based attacks such as \gls{FGSM}~\citenew{goodfellow2014explaining} and \gls{PGD}~\citenew{madry2017towards} are introduced.
There exist recent stronger attacks such as~\cite{moosavi2016deepfool,carlini2016towards,yao2019trust,finlay2019logbarrier,brendel2019accurate}, however, compared to \gls{PGD}, they are much slower to be used for adversarial training in practice. For a comprehensive survey related to adversarial attacks, we refer the reader to~\cite{chakraborty2018adversarial}.

Some recent works focus on the adversarial robustness of \bnns{}~\citenew{bernhard2019impact,sen2019empir,galloway2018attacking,khalil2018combinatorial,lin2018defensive}, however, a strong consensus on the robustness properties of quantized networks is lacking.
In particular, while~\cite{galloway2018attacking} claims parameter quantized networks are robust to gradient based attacks based on empirical evidence,~\citenew{lin2018defensive} shows activation quantized networks are vulnerable to such attacks and proposes a defense strategy assuming the parameters are floating-point. Differently,~\cite{khalil2018combinatorial} proposes a combinatorial attack hinting that activation quantized networks would have obfuscated gradients issue. Though as shown in the paper, the combinatorial attack is not scalable and thus experiments were shown on only few layered MLPs trained on MNIST. \cite{sen2019empir} shows ensemble of mixed precision networks to be more robust than original floating point networks; however~\cite{tramer2020adaptive} later shows the presented defense method can be attacked with minor modification in the loss function.
In short, although it has been hinted that there maybe gradient masking in \bnns{} (especially in activation quantized networks), a thorough understanding is lacking on whether \bnns{} are robust, if not what is the reason for the failure of most commonly used gradient based attacks on binary networks. We answer this question in this paper and introduce improved gradient based attacks.

\section{Conclusion}
In this work, we have shown that both \acrshort{BNN-WQ} and \acrshort{BNN-WAQ} tend to show a fake sense of robustness on gradient based attacks due to poor signal propagation. To tackle this issue, we introduced our two variants of \acrshort{PGD++} attack, namely \acrshort{NJS} and \acrshort{HNS}. Our proposed \acrshort{PGD++} variants not only possess near-complete success rate on binarized networks but also outperform standard $L_\infty$ and $L_2$ bounded \acrshort{PGD} attacks on floating point networks. We finally show improvement in attack success on adversarially trained \acrshort{REF} and \acrshort{BNN-WQ} against stronger attacks (DeepFool and \acrshort{BBA}). 
\SKIP{Since \acrshort{PGD} has become a standard attack for adversarial training, our stronger \acrshort{PGD++} variants could provide a future scope of extending proposed \acrshort{PGD++} for improved robustness.
}

\appendix
\onecolumn
    {\centering
     {\huge \bf Appendix \\[2.5mm]  \par
    }
    }
    \bigskip

Here, we first provide the pseudocodes, proof of the proposition and the derivation of Hessian. Later we give additional experiments, analysis and the details of our experimental setting.
\section{Pseudocode}
We provide pseudocode for \acrshort{PGD++} with \acrshort{NJS} in \algref{alg:mpgd_mjsv} and \acrshort{PGD++} with \acrshort{HNS} in \algref{alg:mpgd_hess}.
\begin{algorithm}[H]
\begin{algorithmic}[1]
\caption{\acrshort{PGD++} with \acrshort{NJS} with $L_\infty$, $T$ iterations, radius $\epsilon$, step size $\eta$, network $f_{\rvw^*}$, input $\rvx^0$, label $k$, one-hot $\rvy\in\{0,1\}^d$, gradient threshold $\rho$.}
\label{alg:mpgd_mjsv}
\Require $T, \epsilon, \eta, \rho, \rvx^0, \rvy, k$ 
\Ensure $\| \rvx^{T+1} - \rvx^0 \|_\infty \leq \epsilon$
\State $\beta_1 = (M\, d)/\big(\sum_{i=1}^M \sum_{j=1}^d \mu_j(\rmJ_i)\big)$
\Comment $\beta_1$ computed using Network Jacobian.
\State $\rvx^1 = P_\infty^\epsilon(\rvx^0 + \mbox{Uniform}(-1, 1))$
\Comment{Random Initialization with Projection}
\For {$t \gets 1 ,\dots T$}
\State{$\beta_2 = 1.0$}
\State $\rvp' = \softmax(\beta_1 (f_{\rvw^*}(\rvx^t)))$
 \If{$1-p'_k \leq \rho$} 
 \Comment{$\rho=0.01$}
    \State {$\beta_2 = -\log(\rho/(d-1)(1-\rho))/\gamma$} 
    \Comment $\gamma$ computed using \proref{thm:beta_cal_gt_supp}
  \EndIf
\State $\ell =-\rvy^T \log(\softmax(\beta_2 \beta_1 (f_{\rvw^*}(\rvx^t))))$
\State $\rvx^{t+1} = P_\infty^\epsilon(\rvx^t + \eta \sign(\nabla_{\bfx}\ell(\bfx^t)))$
\Comment{Update Step with Projection}
\EndFor
\end{algorithmic}
\end{algorithm}

\begin{algorithm}[H]
\begin{algorithmic}[1]
\caption{\acrshort{PGD}++ with \acrshort{HNS} with $L_\infty$, $T$ iterations, radius $\epsilon$, step size $\eta$, network $f_{\rvw^*}$, input $\rvx^0$, label $k$, one-hot $\rvy\in\{0,1\}^d$, gradient threshold $\rho$.}
\label{alg:mpgd_hess}
\Require $T, \epsilon, \eta, \rvx^0, \rvy, k$ 
\Ensure $\| \rvx^{T+1} - \rvx^0 \|_\infty \leq \epsilon$
\State $\rvx^1 = P_\infty^\epsilon(\rvx^0 + \mbox{Uniform}(-1, 1))$
\Comment{Random Initialization with Projection}
\State $\beta^* = \argmax_{\beta>0} \norm{\partial^2 \ell(\beta)/\partial (\rvx^0)^2}_F$
\Comment Grid Search
\For {$t \gets 1 ,\dots T$}
\State $\ell =-\rvy^T \log(\softmax(\beta^* (f_{\rvw^*}(\rvx^t))))$
\State $\rvx^{t+1} = P_\infty^\epsilon(\rvx^t + \eta \sign(\nabla_{\bfx}\ell(\bfx^t)))$
\Comment{Update Step with Projection}
\EndFor
\end{algorithmic}
\end{algorithm}

\section{Derivations}

\subsection{B.1 ~Deriving $\beta$ given a lowerbound on $1-p_k(\beta)$}
\begin{pro}\label{thm:beta_cal_gt_supp}
Let $\rva^K\in\R^d$ with $d >1$ and $a^K_1 \ge a^K_2 \ge \ldots \ge a^K_d$ and $a^K_1 - a^K_d = \gamma$. 
For a given ${0<\rho< (d-1)/d}$, there exists a $\beta > 0$ such that ${1- \softmax(\beta a^K_1)>\rho}$, then ${\beta < -\log(\rho/(d-1)(1-\rho))/\gamma}$. 
\end{pro}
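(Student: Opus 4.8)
The plan is to work directly with the definition of the softmax and reduce the stated inequality to an explicit condition on $\beta$. Writing $p_1(\beta) = \softmax(\beta a^K_1) = e^{\beta a^K_1}/\sum_{j=1}^d e^{\beta a^K_j}$, I would first divide numerator and denominator by $e^{\beta a^K_1}$ to obtain $p_1(\beta) = 1/(1+S)$ with $S := \sum_{j=2}^d e^{-\beta(a^K_1 - a^K_j)}$, so that $1 - p_1(\beta) = S/(1+S)$. The point of this rewriting is that $1-p_1$ is now a monotonically increasing function of the single quantity $S$, and $S$ depends on $\beta$ only through the nonnegative gaps $a^K_1 - a^K_j$.

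The key step is to lower bound $S$ using only the available information, namely $a^K_1 - a^K_d = \gamma$. Since the coordinates are sorted, each gap satisfies $0 \le a^K_1 - a^K_j \le a^K_1 - a^K_d = \gamma$ for every $j \ge 2$, and because $\beta > 0$ this gives $e^{-\beta(a^K_1 - a^K_j)} \ge e^{-\beta\gamma}$. Summing the $d-1$ such terms yields $S \ge (d-1)e^{-\beta\gamma}$. Feeding this through the increasing map $x \mapsto x/(1+x)$ gives $1 - p_1(\beta) \ge (d-1)e^{-\beta\gamma}/(1+(d-1)e^{-\beta\gamma})$. It then suffices to force this lower bound to exceed $\rho$: a short manipulation shows $(d-1)e^{-\beta\gamma}/(1+(d-1)e^{-\beta\gamma}) > \rho$ is equivalent to $(d-1)e^{-\beta\gamma} > \rho/(1-\rho)$, and solving for $\beta$ produces exactly $\beta < -\log(\rho/((d-1)(1-\rho)))/\gamma$, the claimed bound. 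I would finally check that this threshold is positive — which is precisely where the hypothesis $\rho < (d-1)/d$ enters, since it is equivalent to $\rho/((d-1)(1-\rho)) < 1$ — so that the admissible interval for $\beta$ is nonempty and a valid $\beta$ indeed exists.

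The main subtlety is that only the extreme gap $\gamma$ is known, not the individual gaps $a^K_1 - a^K_j$; the proof must therefore replace every gap by its largest possible value $\gamma$, and this uniform worst-case substitution is exactly what introduces the factor $d-1$ and makes the bound a sufficient (rather than a tight) condition on $\beta$. Some care is also needed with the monotonicity argument and with the direction of the inequality when exponentiating and dividing by $\gamma$, but these are routine once the reduction to $S$ is in place; the tie-allowing ordering $a^K_1 \ge \cdots \ge a^K_d$ causes no difficulty, since the bound $a^K_1 - a^K_j \le \gamma$ holds regardless of ties.
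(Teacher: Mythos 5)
Your proposal is correct and follows essentially the same route as the paper's own proof: divide through by $e^{\beta a^K_1}$, replace every gap $a^K_1 - a^K_j$ by its worst-case value $\gamma$ to get the factor $(d-1)e^{-\beta\gamma}$, and solve the resulting inequality for $\beta$. Your explicit check that the threshold is positive precisely when $\rho < (d-1)/d$ is a small addition the paper leaves implicit, but it does not change the argument.
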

\begin{proof}
Assuming $a^K_1 - a^K_d = \gamma$, we derive a condition on $\beta$ such that ${1-\softmax(\beta a^K_1) > \rho}$.
\begin{align}
1- \softmax(\beta a^K_1)&>\rho\ ,\\\nonumber
\softmax(\beta a^K_1)&< 1- \rho\ ,\\\nonumber
\exp(\beta a^K_1)/\sum_{\lambda=1}^d\exp(\beta a^K_{\lambda}) &< 1- \rho\ ,\\\nonumber
1/\big(1 + \sum_{\lambda=2}^d\exp(\beta(a^K_{\lambda}-a^K_1))\big) &< 1- \rho\ .
\end{align}
Since, $a^K_1 - a^K_{\lambda}\leq\gamma$ for all $\lambda > 1$, 
\begin{equation}
    1/\big(1 + \sum_{\lambda=2}^d\exp(\beta(a^K_{\lambda}-a^K_1))\big) \leq 1/\big(1 + \sum_{\lambda=2}^d\exp(-\beta \gamma)\big)\ .
\end{equation}
Therefore, to ensure $1/\big(1 + \sum_{\lambda=2}^d\exp(\beta(a^K_{\lambda}-a^K_1))\big) < 1- \rho$, we consider,
\begin{align}
1/\big(1 + \sum_{\lambda=2}^d\exp(-\beta \gamma)\big) &< 1- \rho\ ,\quad\mbox{$a^K_1 - a^K_{\lambda}\leq\gamma$ for all $\lambda > 1$}\ ,\\\nonumber
1/\big(1 + (d-1)\exp(-\beta \gamma)\big) &< 1- \rho\ ,\\\nonumber
\exp(-\beta \gamma) &> \rho/(d-1)(1-\rho)\ ,\\\nonumber
-\beta\gamma &> \log(\rho/(d-1)(1-\rho))\ ,\quad\mbox{$\exp$ is monotone}\ ,\\\nonumber
\beta &< -\log(\rho/(d-1)(1-\rho))/\gamma\ .
\end{align}
Therefore for any ${\beta < -\log(\rho/(d-1)(1-\rho))/\gamma}$, the above inequality \\ $1- \softmax(\beta a^K_1)>\rho$ is satisfied.
\end{proof}

\subsection{B.2 ~Derivation of Hessian}
We now derive the Hessian of the input mentioned in Eq.~(\myref{8}) of the paper.
The input gradients can be written as:

\begin{align}
\frac{\partial \ell(\beta)}{\partial \rvx^0} = \frac{\partial \ell(\beta)}{\partial \rvp(\beta)} \frac{\partial \rvp(\beta)}{\partial \bar{\rva}^K(\beta)} \beta \rmJ = \psi(\beta) \beta \rmJ\ .
\end{align}
Now by product rule of differentiation, input hessian can be written as:
\begin{align}
\frac{\partial^2 \ell(\beta)}{\partial (\rvx^0)^2} &= \beta \left[\psi(\beta) \frac{\partial \rmJ}{\partial \rvx^0} +  \left(\frac{\partial \psi(\beta)}{\partial \rvx^0} \right)^T \rmJ\right]\ ,\\\nonumber
&= \beta \left[\psi(\beta) \frac{\partial \rmJ}{\partial \rvx^0} +  \left(\frac{\partial \rvp(\beta)}{\partial \rvx^0} \right)^T \rmJ\right]\ , \quad \psi(\beta)=-(\rvy - \rvp(\beta))^T\ ,\\\nonumber
&= \beta \left[\psi(\beta) \frac{\partial \rmJ}{\partial \rvx^0} +  \beta \left(\frac{\partial \rvp(\beta)}{\partial \bar{\rva}^K} \rmJ \right)^T \rmJ\right]\ .
\end{align}

\section{Additional Experiments}
In this section we first provide more experimental details and then some ablation studies.

\subsection{C.1 ~Experimental Details}
We first mention the hyperparameters used to perform \acrshort{FGSM} and \acrshort{PGD} attack for all the experiments in the paper in \tabref{tab:attack_params}. To make a fair comparison, we keep the attack parameters same for our proposed variants of \acrshort{FGSM++} and \acrshort{PGD++} attacks. In our experiments we found that increasing attack iterations, attack radius and step size \textit{does not yield better success rate for original gradient based attacks} thus we used these standard attack parameters. For attacks under $L_2$ bound, we use attack parameters similar to~\cite{finlay2019logbarrier}. All our experiments are performed using single NVIDIA Tesla P100 GPUs. We also perform multiple runs for achieving trained \bnns{} and the performance change remains $\leq 0.3 \%$ reflecting the significance of improvement in our attack success rates.

\SKIP{
\begin{wraptable}{r}{6.5cm}
\small
\vspace{-1ex}
\begin{tabular}{l|l|ccc}
\toprule
\textbf{Dataset}                    & \textbf{Attack}     & \textbf{$\epsilon$} & \textbf{$\eta$} & \textbf{$T$}  \\
\midrule
\multirow{3}{*}{\textbf{\cifar{-10}}}  & \textbf{\acrshort{FGSM}}      & 8       & 8   & 1  \\
                           & \textbf{\acrshort{PGD}} ($L_\infty$) & 8       & 2   & 20 \\
                           & \textbf{\acrshort{PGD}} ($L_2$)   & 120     & 15  & 20 \\
\midrule
\multirow{3}{*}{\textbf{\cifar{-100}}} & \textbf{\acrshort{FGSM}}       & 4       & 4   & 1  \\
                           & \textbf{\acrshort{PGD}} ($L_\infty$) & 4       & 1   & 10 \\
                           & \textbf{\acrshort{PGD}} ($L_2$)   & 60      & 15  & 10 \\
\bottomrule
\end{tabular}
\vspace{-1ex}
\caption{\em Attack parameters ($\epsilon$ \& $\eta$ in pixels).
}
\vspace{-5ex}
\label{tab:attack_params}
\end{wraptable}
}

\begin{table}[t]
\centering
\small
\begin{tabular}{l|l|ccc}
\toprule
\textbf{Dataset}                    & \textbf{Attack}     & \textbf{$\epsilon$} & \textbf{$\eta$} & \textbf{$T$}  \\
\midrule
\multirow{3}{*}{\textbf{\cifar{-10}}}  & \textbf{\acrshort{FGSM}}      & 8       & 8   & 1  \\
                           & \textbf{\acrshort{PGD}} ($L_\infty$) & 8       & 2   & 20 \\
                           & \textbf{\acrshort{PGD}} ($L_2$)   & 120     & 15  & 20 \\
\midrule
\multirow{3}{*}{\textbf{\cifar{-100}}} & \textbf{\acrshort{FGSM}}       & 4       & 4   & 1  \\
                           & \textbf{\acrshort{PGD}} ($L_\infty$) & 4       & 1   & 10 \\
                           & \textbf{\acrshort{PGD}} ($L_2$)   & 60      & 15  & 10 \\
\bottomrule
\end{tabular}
\caption{\em Attack parameters ($\epsilon$ \& $\eta$ in pixels).
}
\label{tab:attack_params}
\end{table}

For \acrshort{PGD++} with \acrshort{HNS} variant, we maximize Frobenius norm of Hessian with respect to the input as specified in Eq.~(\myref{8}) of the paper by grid search for the optimum $\beta$. We would like to point out that since only $\psi(\beta)$ and $\rvp(\beta)$ terms are dependent on $\beta$, we do not need to do forward and backward pass of the network multiple times during the grid search. This significantly reduces the computational overhead during the grid search. We can simply use the same network outputs $\rva^K$ and network jacobian $\rmJ$ (as computed without using $\beta$) for the grid search, while computing the other terms at each iteration of grid search. We apply grid search to find the optimum beta between $100$ equally spaced intervals of $\beta$ starting from $\beta_1$ to $\beta_2$. Here, $\beta_1$ and $\beta_2$ are computed based on Proposition \myref{1} in the paper where $\rho=1e-72$ and $\rho=1-(1/d)-(1e-2)$ respectively, where $d$ is number of classes and $\gamma=a_1^K-a_2^K$ so that $1-\softmax(\beta a_1^K) < \rho$. Also, note that we estimate the optimum $\beta$ for each test sample only at the start of the first iteration of an iterative attack and then use the same $\beta$ for the next iterations.
\paragraph{Computational Overhead of \acrshort{NJS} and \acrshort{HNS}.} Our Jacobian calculation takes just a single backward pass through the network and thus adds a negligible overhead. Our \acrshort{NJS} approach for scaling estimates $\beta$ as inverse of mean \acrshort{JSV} using $100$ random test samples, which is similar to $100$ backward passes. The computation overhead of \acrshort{HNS} is roughly comprised of a full backward pass through the network for estimating the Jacobian and grid search for finding optimal $\beta$.

\begin{wraptable}{r}{6.5cm}
    \small
    \centering
    \begin{tabular}{ccc}
        \toprule
          \textbf{\acrshort{FGSM}} & \textbf{\acrshort{FGSM++} (\acrshort{NJS})} & \textbf{\acrshort{FGSM++} (\acrshort{HNS})} \\
          \midrule
        0.01492 & 0.01604 & 0.0932 \\
        \bottomrule
    \end{tabular}
    \caption{\em Comparisons of computational cost in seconds between \acrshort{FGSM} and \acrshort{FGSM++} to generate one adversarial sample using different methods. Note, that both variants take less than 0.1 second which is significantly faster than stronger attacks mentioned in Table 4 and 5 of the main paper.}
    \label{tab:comp_cost}
\end{wraptable}

For \acrshort{HNS} variant, we maximize the Frobenius norm of the Hessian as specified in Eq.~(\myref{8}) of the paper by grid search to find the optimum $\beta$. First of all, we are computing the Hessian w.r.t. input (dimension in thousands), this is significantly cheaper than Hessian w.r.t. network parameters (dimension in millions). Furthermore, in Eq.~(\myref{8}), $\rmJ$ does not depend on $\beta$, therefore $\rmJ$ and $\partial \rmJ/\partial \rvx^0$ are \textit{computed only once during the grid search} and re-used for each iteration of grid search. This significantly reduces the computational overhead. Our Jacobian calculation takes just a single full backward pass through the network (using parallel compute), thus adds negligible overhead. Moreover, for piecewise linear networks (\eg., ReLU activations), $\partial \rmJ/\partial \rvx^0=0$ almost everywhere \citenew{yao2018hessian}. We provide computational time comparisons for original \acrshort{FGSM} and \acrshort{FGSM++} using \acrshort{NJS} and \acrshort{HNS} on \sresnet{-18} using \cifar{-10} in \tabref{tab:comp_cost}. These experiments are performed on NVIDIA GeForce RTX 2080 GPUs at a batchsize of $1$. Note, that our implementation of sequential grid search with $100$ intervals can be parallelized to speed up \acrshort{HNS} further.

\subsection{C.2 ~Comparisons against Auto-\acrshort{PGD} attack, gradient free attack and Combinatorial attack}

We also compared our proposed \acrshort{PGD++} variants against recently proposed \acrfull{APGD} with \acrfull{DLR} loss~\citenew{croce2020reliable} and gradient free Square Attack~\citenew{andriushchenko2020square} on different networks trained using \svgg{-16} on \cifar{-10} dataset and the results are reported in \tabref{tab:apgd_square}. The attack parameters for this experiment are the same as reported in the paper. It can be clearly seen that our proposed variants perform better than both \acrshort{APGD} with \acrshort{DLR} loss and Square Attack, consistently achieving 0\% adversarial accuracy. Infact, much computationally expensive Square attack is unable to achieve 0\% adversarial accuracy in any of the cases under the enforced $L_\infty$ bound. 

\SKIP{\begin{wrapfigure}{r}{6.5cm}
    \vspace{-1ex}
    \includegraphics[width=0.4\textwidth]{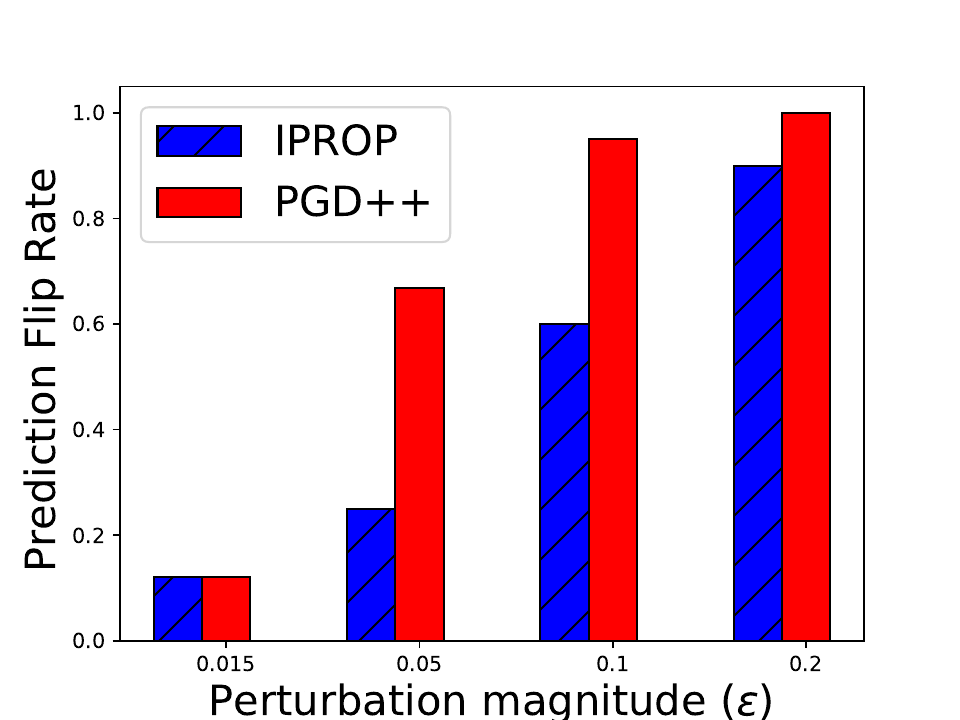}
    \caption{\em Classification flip rate (higher the better) using \acrshort{PGD++} (\acrshort{NJS}) attack and IPROP~\cite{khalil2018combinatorial} on MLP trained on MNIST dataset under various $L_\infty$ bound. Notice, \acrshort{PGD++} attack consistently flips the predicition more times than IPROP under different perturbation magnitudes.
    }
\vspace{-4ex}
\label{fig:iprop_comp}
\end{wrapfigure}}
\begin{figure}{}
    \centering
    \includegraphics[width=0.4\textwidth]{figures/iprop_comparison.pdf}
    \caption{\em Classification flip rate (higher the better) using \acrshort{PGD++} (\acrshort{NJS}) attack and IPROP~\cite{khalil2018combinatorial} on MLP trained on MNIST dataset under various $L_\infty$ bound. Notice, \acrshort{PGD++} attack consistently flips the predicition more times than IPROP under different perturbation magnitudes.
    }
\label{fig:iprop_comp}
\end{figure}

We provide comparisons against combinatorial attack designed for quantized networks proposed in~\cite{khalil2018combinatorial}. Since their proposed attack namely \acrshort{IPROP} is not scalable for deep neural networks and was thus tested on MLPs trained on MNIST dataset. We follow their experimental protocol with a \acrshort{BNN-WAQ} using MLP (5 hidden layer with 100 units each) trained on MNIST dataset achieving 95\% classification accuracy. We provide the proportion of examples where \acrshort{IPROP} and \acrshort{PGD++} (\acrshort{NJS}) attack is able to flip the classification in \figref{fig:iprop_comp} under different $L_\infty$ perturbation bounds in range $[0,1]$. Our \acrshort{PGD++} variant performs consistently better than \acrshort{IPROP} under different perturbation magnitudes, clearly reflecting efficacy of \acrshort{PGD++} despite being simple modification over \acrshort{PGD}.

\SKIP{
\begin{table}[]
\centering
\small
\begin{tabular}{l|cccc|cccc}
\toprule[1.5pt]
 \multirow{3}{*}{\textbf{Method}} & \multicolumn{4}{c|}{\textbf{\sresnet{-18}}} & \multicolumn{4}{c}{\textbf{\svgg{-16}}} \\
 \cmidrule[1pt]{2-9}
 & \multirow{2}{*}{\textbf{\acrshort{APGD}}} & \textbf{Square} & \multicolumn{2}{c|}{\textbf{\acrshort{PGD++}}} & \multirow{2}{*}{\textbf{\acrshort{APGD}}} & \textbf{Square} & \multicolumn{2}{c}{\textbf{\acrshort{PGD++}}} \\ 
 &  & \textbf{Attack} & \textbf{\acrshort{NJS}} & \textbf{\acrshort{HNS}} &  & \textbf{Attack} & \textbf{\acrshort{NJS}} & \textbf{\acrshort{HNS}} \\
 \midrule[1pt]
 \acrshort{REF} & \textbf{0.00} & 0.55 & \textbf{0.00} & \textbf{0.00} & 0.79 & 2.25 & \textbf{0.00} & \textbf{0.00} \\  
 \acrshort{BNN-WQ} & \textbf{0.00} & 0.41 & \textbf{0.00} & \textbf{0.00} & 8.23 & 1.98 & \textbf{0.00} & \textbf{0.00} \\  
 \acrshort{BNN-WAQ} & 6.32 & 21.45 & \textbf{0.03} & 0.04  & 0.38 & 16.67 & \textbf{0.01} & 0.02\\  
\bottomrule[1.5pt]
\end{tabular}
\vspace{-1ex}
\caption{\em Adversarial accuracy for \acrshort{REF}, \acrshort{BNN-WQ} and \acrshort{BNN-WAQ} trained on \cifar{-10} using \sresnet{-18} and \svgg{-16}. Both our \acrshort{NJS} and \acrshort{HNS} variants consistently outperform \acrfull{APGD}~\citenew{croce2020reliable} performed using \acrfull{DLR} loss and a gradient free attack namely, Square Attack~\citenew{andriushchenko2020square} under $L_\infty$ bound (8/255). \kartik{fix this table by only keeping vgg16 as resnet18 has been put in the main paper}}
\label{tab:apgd_square}
\vspace{-1ex}
\end{table}
}

\SKIP{
\begin{table}[]
\centering
\small
\begin{tabular}{l|cccc|cccc}
\toprule
 \multirow{3}{*}{\textbf{Method}} & \multicolumn{4}{c|}{\textbf{\sresnet{-18}}} & \multicolumn{4}{c}{\textbf{\svgg{-16}}} \\
 \cmidrule{2-9}
 & \multirow{2}{*}{\textbf{\acrshort{APGD}}} & \textbf{Square} & \multicolumn{2}{c|}{\textbf{\acrshort{PGD++}}} & \multirow{2}{*}{\textbf{\acrshort{APGD}}} & \textbf{Square} & \multicolumn{2}{c}{\textbf{\acrshort{PGD++}}} \\ 
 &  & \textbf{Attack} & \textbf{\acrshort{NJS}} & \textbf{\acrshort{HNS}} &  & \textbf{Attack} & \textbf{\acrshort{NJS}} & \textbf{\acrshort{HNS}} \\
 \midrule
 \acrshort{REF} & \textbf{0.00} & 0.55 & \textbf{0.00} & \textbf{0.00} & 0.79 & 2.25 & \textbf{0.00} & \textbf{0.00} \\  
 \acrshort{BNN-WQ} & \textbf{0.00} & 0.41 & \textbf{0.00} & \textbf{0.00} & 8.23 & 1.98 & \textbf{0.00} & \textbf{0.00} \\  
 \acrshort{BNN-WAQ} & 6.32 & 21.45 & \textbf{0.03} & 0.04  & 0.38 & 16.67 & \textbf{0.01} & 0.02\\  
\bottomrule
\end{tabular}
\vspace{-1ex}
\caption{\em Adversarial accuracy for \acrshort{REF}, \acrshort{BNN-WQ} and \acrshort{BNN-WAQ} trained on \cifar{-10} using \sresnet{-18}. Both our \acrshort{NJS} and \acrshort{HNS} variants consistently outperform \acrfull{APGD}~\citenew{croce2020reliable} performed using \acrfull{DLR} loss and a gradient free attack namely, Square Attack~\citenew{andriushchenko2020square} under $L_\infty$ bound (8/255).}
\label{tab:apgd_square}
\vspace{-1ex}
\end{table}
}

\begin{table}[t]
\centering
\small
\begin{tabular}{l|cccc}
\toprule
 \multirow{2}{*}{\textbf{Method}} & \multirow{2}{*}{\textbf{\acrshort{APGD}}} & \textbf{Square} & \multicolumn{2}{c}{\textbf{\acrshort{PGD++}}} \\ 
 &  & \textbf{Attack} & \textbf{\acrshort{NJS}} & \textbf{\acrshort{HNS}} \\
 \midrule
 \textbf{\acrshort{REF}} & 0.79 & 2.25 & \textbf{0.00} & \textbf{0.00} \\  
 \textbf{\acrshort{BNN-WQ}} & 8.23 & 1.98 & \textbf{0.00} & \textbf{0.00} \\  
 \textbf{\acrshort{BNN-WAQ}} & 0.38 & 16.67 & \textbf{0.01} & 0.02 \\  
\bottomrule
\end{tabular}
\vspace{1ex}
\caption{\em Adversarial accuracy for \acrshort{REF}, \acrshort{BNN-WQ} and \acrshort{BNN-WAQ} trained on \cifar{-10} using \svgg{-16}. Both our \acrshort{NJS} and \acrshort{HNS} variants consistently outperform \acrfull{APGD}~\citenew{croce2020reliable} performed using \acrfull{DLR} loss and a gradient free attack namely, Square Attack~\citenew{andriushchenko2020square} under $L_\infty$ bound (8/255).}
\label{tab:apgd_square}
\end{table}
\subsection{C.3 ~Other Experiments}
\SKIP{
\begin{table*}[]
\centering
\small
\begin{tabular}{l|ccc|ccc|ccc}
\toprule[1.5pt]
 \multirow{3}{*}{\textbf{Network}} & \multicolumn{9}{c}{\textbf{Adversarial Accuracy (\%)}} \\
\cmidrule[1pt]{2-10}
 & \multirow{2}{*}{\textbf{\acrshort{FGSM}}} & \multicolumn{2}{c}{\textbf{\acrshort{FGSM++}}} & \multirow{2}{*}{\textbf{\acrshort{PGD}} ($L_\infty$)} & \multicolumn{2}{c}{\textbf{\acrshort{PGD++}} ($L_\infty$)} & \multirow{2}{*}{\textbf{\acrshort{PGD}} ($L_2$)} & \multicolumn{2}{c}{\textbf{\acrshort{PGD++}} ($L_2$)} \\ 
 &  & \textbf{\acrshort{NJS}} & \textbf{\acrshort{HNS}} & & \textbf{\acrshort{NJS}} & \textbf{\acrshort{HNS}} & & \textbf{\acrshort{NJS}} & \textbf{\acrshort{HNS}} \\
 \midrule[1pt]

 ResNet-18 & 9.06 & 9.23 & \textbf{2.70} & 0.14 & 0.14 & \textbf{0.00} & 5.38 & 0.17 & \textbf{0.15} \\ 
 VGG-16 & 16.28 & 17.24 & \textbf{9.19} & 1.53 & 0.95 & \textbf{0.25} & 4.87 & 1.50 & \textbf{1.38} \\ 
 ResNet-50 & 12.95 & 12.95 & \textbf{11.94} & 0.12 & \textbf{0.00} & \textbf{0.00} & 31.01 & 4.43 & \textbf{4.14} \\ 
 DenseNet-121 & 11.41 & 11.41 & \textbf{10.74} & \textbf{0.00} & \textbf{0.00} & \textbf{0.00} & 6.10 & 3.09 & \textbf{2.76} \\

\bottomrule[1.5pt]
\end{tabular}
\vspace{-1ex}
\caption{\em Adversarial accuracy on the test set of \cifar{-100} for \acrshort{REF} (floating point networks).
Both our \acrshort{NJS} and \acrshort{HNS} variants consistently outperform original \acrshort{FGSM} and \acrshort{PGD} ($L_\infty/L_2$ bounded) attacks.}
\label{tab:ref_mpgdlinf}
\vspace{-2ex}
\end{table*}
}

\begin{table*}[t]
\centering
\small
\begin{tabular}{l|ccc|ccc|ccc}
\toprule
 \multirow{3}{*}{\textbf{Network}} & \multicolumn{9}{c}{\textbf{Adversarial Accuracy (\%)}} \\
\cmidrule{2-10}
 & \multirow{2}{*}{\textbf{\acrshort{FGSM}}} & \multicolumn{2}{c}{\textbf{\acrshort{FGSM++}}} & \multirow{2}{*}{\textbf{\acrshort{PGD}} ($L_\infty$)} & \multicolumn{2}{c}{\textbf{\acrshort{PGD++}} ($L_\infty$)} & \multirow{2}{*}{\textbf{\acrshort{PGD}} ($L_2$)} & \multicolumn{2}{c}{\textbf{\acrshort{PGD++}} ($L_2$)} \\ 
 &  & \textbf{\acrshort{NJS}} & \textbf{\acrshort{HNS}} & & \textbf{\acrshort{NJS}} & \textbf{\acrshort{HNS}} & & \textbf{\acrshort{NJS}} & \textbf{\acrshort{HNS}} \\
 \midrule

 \textbf{ResNet-18} & 9.06 & 9.23 & \textbf{2.70} & 0.14 & 0.14 & \textbf{0.00} & 4.86 & 0.17 & \textbf{0.15} \\ 
 \textbf{VGG-16} & 16.28 & 17.24 & \textbf{9.19} & 1.53 & 0.95 & \textbf{0.25} & 4.87 & 1.50 & \textbf{1.38} \\ 
 \textbf{ResNet-50} & 12.95 & 12.95 & \textbf{11.94} & 0.12 & \textbf{0.00} & \textbf{0.00} & 10.50 & 4.43 & \textbf{4.14} \\ 
 \textbf{DenseNet-121} & 11.41 & 11.41 & \textbf{10.74} & \textbf{0.00} & \textbf{0.00} & \textbf{0.00} & 4.32 & 3.09 & \textbf{2.76} \\

\bottomrule
\end{tabular}
\vspace{1ex}
\caption{\em Adversarial accuracy on the test set of \cifar{-100} for \acrshort{REF} (floating point networks).
Both our \acrshort{NJS} and \acrshort{HNS} variants consistently outperform original \acrshort{FGSM} and \acrshort{PGD} ($L_\infty/L_2$ bounded) attacks.}
\label{tab:ref_mpgdlinf}
\end{table*}

\begin{table}[t]
\small
\centering
\begin{tabular}{l|ccccc}
\toprule
 \multirow{2}{*}{\textbf{Network}} & \multirow{2}{*}{\textbf{\acrshort{PGD}}} & \textbf{Deep} & \textbf{\acrshort{BBA}} & \multicolumn{2}{c}{ \textbf{\acrshort{PGD++}}} \\
 
 & & \textbf{Fool} &  & \textbf{\acrshort{NJS}} & \textbf{\acrshort{HNS}} \\
 \midrule
\textbf{\sresnet{-18}}  &  8.57 & 18.92 & 0.81 & \textbf{0.03}  & 0.04 \\
\textbf{\svgg{-16}} & 78.01 & 12.12 & 0.10  & \textbf{0.01} & 0.02 \\
\bottomrule
\end{tabular}
\vspace{1ex}
\caption{\em Adversarial accuracy on the test set of \cifar{-10} for \acrshort{BNN-WAQ}. Here, we compare our proposed variants against much stronger attacks namely DeepFool~\citenew{moosavi2016deepfool} and \acrshort{BBA}~\citenew{brendel2019accurate}. Both our variants outperform stronger attacks. Note, DeepFool and \acrshort{BBA} are much slower in practise requiring 100-1000 iterations. \acrshort{BBA} specifically requires even an adversarial start point that needs to be computed using another adversarial attack.
}
\label{tab:bnnwaq_strongerattacks}
\end{table}

We provide adversarial accuracy comparisons for different attack methods on \cifar{-100} using \sresnet{-18}, \svgg{-16}, \sresnet{-50} and \sdensenet{-121} in \tabref{tab:ref_mpgdlinf}. Again similar to the results in the paper, our proposed \acrshort{PGD++} and \acrshort{FGSM++} outperform original form of \acrshort{PGD} and \acrshort{FGSM} consistently in all the experiments on floating point networks. We also provide adversarial accuracy comparison of our proposed variants against stronger attacks namely DeepFool~\citenew{moosavi2016deepfool} and \acrshort{BBA}~\citenew{brendel2019accurate} on \acrshort{BNN-WAQ} trained on \cifar{-10} dataset in \tabref{tab:bnnwaq_strongerattacks}. In this experiment, our proposed variants again outperform even the stronger attacks which take 100-1000 iterations with adversarial start point (instead of random initial perturbation). It should be noted that although \acrshort{BBA} performs much better than DeepFool and \acrshort{PGD}, it still has inferior success rate than ours considering the fact that it takes multiple hours to run \acrshort{BBA} whereas our proposed variants are almost as efficient as \acrshort{PGD} attack.

\begin{wrapfigure}{r}{6.5cm}
    \vspace{-2.3ex}
    \includegraphics[width=0.4\textwidth]{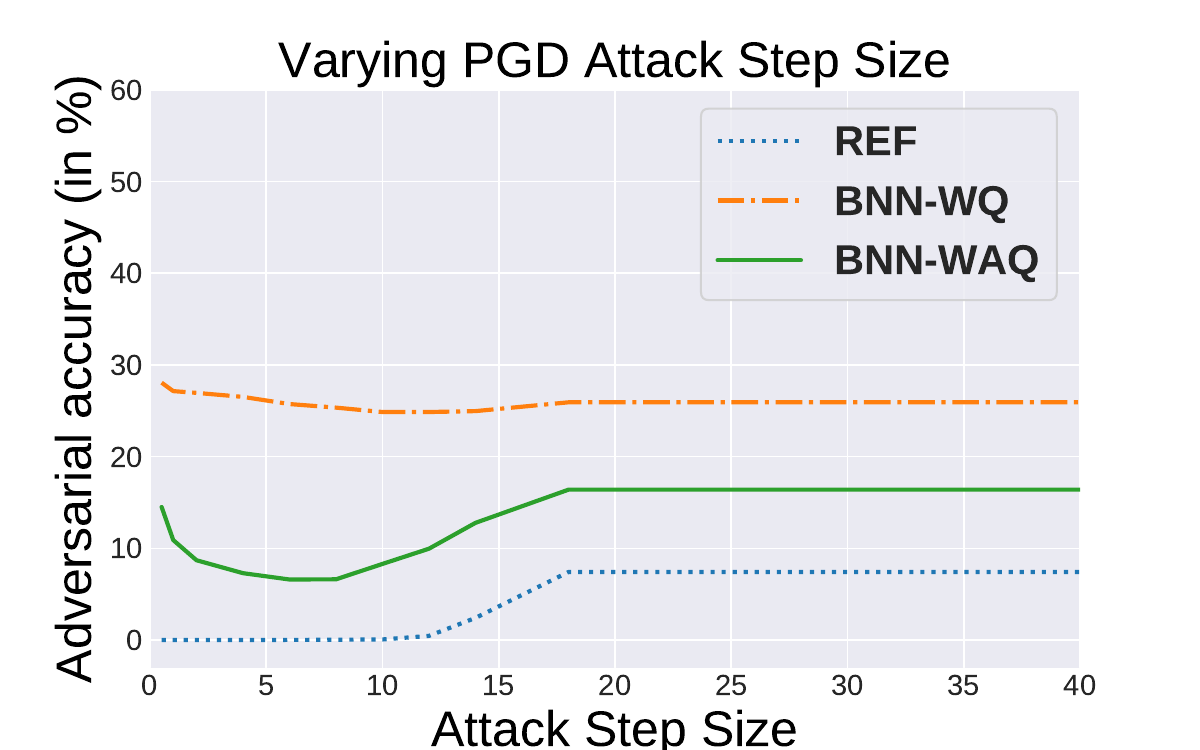}
    \caption{\em Adversarial accuracy using \acrshort{PGD} attack under $L_\infty$ bound (8/255) with varying step size ($\eta$) on \sresnet{-18} trained on \cifar{-10}. Notice, \acrshort{PGD} attack is unable to reach zero adversarial accuacy for \bnns{} with any step size.
    }
\vspace{-4ex}
\label{fig:adv_vs_stepsize}
\end{wrapfigure}
\paragraph{Step Size Tuning for \acrshort{PGD} attack.} We would like to point out that step size $\eta$ and temperature scale $\beta$ have different effects in the attacks performed. Notice, \acrshort{PGD} and \acrshort{FGSM} attack under $L_\infty$ bound only use the sign of input gradients in each gradient ascent step. Thus, if the input gradients are completely saturated (which is the case for \bnns{}), original forms of \acrshort{PGD} or \acrshort{FGSM} will not work irrespective of the step size used. To illustrate this, we performed extensive step size tuning for original form of \acrshort{PGD} attack on different \sresnet{-18} models trained on \cifar{-10} dataset and the adversarial accuracies are reported in \figref{fig:adv_vs_stepsize}. It can be observed clearly that although tuning the step size lowers adversarial accuracy a bit in some cases but still cannot reach zero for \bnns{} unlike our proposed variants.

\paragraph{Adversarial training using \acrshort{PGD++}.} We also investigate the potential application of \acrshort{PGD++} for adversarial training to improve the robustness of neural networks. \acrshort{PGD++} attack is most effective when applied to a network with poor signal propagation. However, adversarial training is performed from random initialization \citenew{glorot2010understanding} exhibiting good signal propagation. Thus, \acrshort{PGD} and \acrshort{PGD++} perform similarly for adversarial training. We infer these conclusions from our experiments on adversarial training using \acrshort{PGD++}.

\begin{figure*}[t]
        \begin{subfigure}{0.48\textwidth}
    \includegraphics[width=\textwidth]{figures/psijsvsoft_vs_beta.pdf}
        \caption{}          
        \end{subfigure} 
        \begin{subfigure}{0.48\textwidth}
    \includegraphics[width=\textwidth]{figures/hessnorms_vs_beta.pdf}
    \caption{}        
        \end{subfigure}%
    \caption{\em Plots to show how variation in $\beta$ affects jacobian, error signal and hessian. (a) Error signal $(\psi(\beta))$ and Jacobian of softmax $(\partial \rvp(\beta)/\partial \bar{\rva}^K)$ \vs $\beta$ for a random correctly classified logits. Notice that, $\psi(\beta)$ and mean \acrshort{JSV} of $\partial \rvp(\beta)/\partial \bar{\rva}^K$ behave similarly.
    (b) Hessian norm vs. $\beta$ on a random correctly classified image. The plot clearly shows a concave behaviour and the maximum point occurs at a small value where both $\psi(\beta)$ and $\partial \rvp(\beta)/\partial \bar{\rva}^K$ are non zero.
    }
\label{fig:beta_vs_jacpsi_hess}
\end{figure*}
%
\begin{table}
\centering
\small
\begin{tabular}{l|l|ccc}
\toprule
                    & \textbf{Original}     & \textbf{Heuristic} & \textbf{\acrshort{NJS}} & \textbf{\acrshort{HNS}}  \\
\midrule
\textbf{\acrshort{BNN-WQ}}  & 0.8585 & 0.8845 & 0.4139 & \textbf{0.3450} \\
\textbf{\acrshort{BNN-WAQ}} & 0.7239 & 3.1578 & 0.3120 & \textbf{0.2774} \\
\bottomrule
\end{tabular}
\vspace{1ex}
\caption{\em \acrshort{CLEVER} Scores~\citenew{weng2018evaluating} for \acrshort{BNN-WQ} and \acrshort{BNN-WAQ} trained on \cifar{-10} using \sresnet{-18}. We compare CLEVER Scores returned for $L_1$ norm perturbation using different ways of temperature scaling applied. Here, Original refers to original network without temperature scaling and Heuristic denotes temperature scale with small $\beta=0.01$.    
}
\label{tab:clever}
\end{table}

\paragraph{\acrshort{CLEVER} Scores.} Recently \acrshort{CLEVER} Scores~\citenew{weng2018evaluating} have been proposed as an empirical estimate to measure robustness lower bounds for deep networks. It has been later shown that gradient masking issues cause \acrshort{CLEVER} to overestimate the robustness bounds~\citenew{goodfellow2018gradient}. Here we try to improve the \acrshort{CLEVER} scores using different ways of choosing $\beta$ in temperature scaling. For this experiment, we use \acrshort{CLEVER} implementation of Adversarial Training Toolbox\footnote{https://github.com/Trusted-AI/adversarial-robustness-toolbox}~\citenew{nicolae2018adversarial}. We set number of batches to 50, batch size to 10, radius to 5, and chose $L_1$ norm as hyperparameters (based on the \cite{weng2018evaluating}). We compare our variants namely \acrshort{NJS} and \acrshort{HNS} against heuristic choice of small $\beta=0.01$ and original \acrshort{CLEVER} Scores for \acrshort{BNN-WQ} and \acrshort{BNN-WAQ} (trained on \cifar{-10 using \sresnet{-18}}) in \tabref{tab:clever}. It can be clearly seen that our proposed variants improve the robustness bounds computed using \acrshort{CLEVER} whereas a heuristic choice of $\beta=0.01$ performs even worse. 

\subsection{C.4 ~Stability of \acrshort{PGD++} with \acrshort{NJS} with variations in $\rho$ }
We perform ablation studies with varying $\rho$ for \acrshort{PGD++} with \acrshort{NJS} in \tabref{tab:rho_vs_NJS} for \cifar{-10} dataset using \sresnet{-18} architecture. It clearly illustrates that our \acrshort{NJS} variant is quite robust to the choice of $\rho$ as we are able to achieve near perfect success rate with \acrshort{PGD++} with different values of $\rho$. As long as value of $\rho$ is large enough to avoid one-hot encoding on softmax outputs (in turn avoid $\|\psi(\beta)\|$ to be zero) of correctly classified sample, our approach with \acrshort{NJS} variant is quite stable.
\begin{table}[t]
\small
\centering
\begin{tabular}{l|c|c|c|c|c|c}
\toprule
\multirow{2}{*}{\textbf{Methods}} & \multicolumn{6}{c}{\textbf{\acrshort{PGD++} (\acrshort{NJS})} - \textbf{Varying} $\rho$}\\
& $1e-05$ & $1e-04$ & $1e-03$ & $1e-02$ & $1e-01$ & $2e-01$ \\
\midrule
\textbf{\acrshort{REF}} & 0.00 & 0.00 & 0.00 & 0.00 & 0.00 & 0.00 \\
\textbf{\acrshort{BNN-WQ}} & 0.00 & 0.00 & 0.00 & 0.00 & 0.00 & 0.00 \\
\textbf{\acrshort{BNN-WAQ}} & 0.15 & 0.08 & 0.04 & 0.03 & 0.04 & 0.02 \\
\bottomrule
\end{tabular}
\vspace{1ex}

\caption{\em Adversarial accuracy on the test set for binary neural networks using $L_\infty$ bounded \acrshort{PGD++} attack using \acrshort{NJS} with varying $\rho$. For different values of $\rho$, our approach is quite stable.
}
\label{tab:rho_vs_NJS}
\end{table}

\subsection{C.5 ~Signal Propagation and Input Gradient Analysis using \acrshort{NJS} and \acrshort{HNS}}
We first provide plots to to show how jacobian, error signal and hessian norm are influenced by $\beta$ in \figref{fig:beta_vs_jacpsi_hess}.
We then provide an example illustration in \figref{fig:beta_vs_ingrad} to better understand how the input gradient norm \ie, $\|\partial \ell(\beta)/\partial \rvx^0\|_2$, and norm of sign of input gradient, \ie, $\|\text{sign}(\partial \ell(\beta)/\partial \rvx^0)\|_2$ is influenced by $\beta$. It clearly shows that both the plots have a concave behavior where an optimal $\beta$ can maximize the input gradient. Also, it can be quite evidently seen in \figref{fig:beta_vs_ingrad} (b) that within an optimal range of $\beta$, gradient vanishing issue can be avoided. If $\beta\to 0$ or $\beta\to \infty$, it changes all the values in input gradient matrix to zero and inturn $\|\text{sign}(\partial \ell(\beta)/\partial \rvx^0)\|_2=0$. 
\begin{figure*}[t]
        \begin{subfigure}{0.48\textwidth}
          \includegraphics[width=\textwidth]{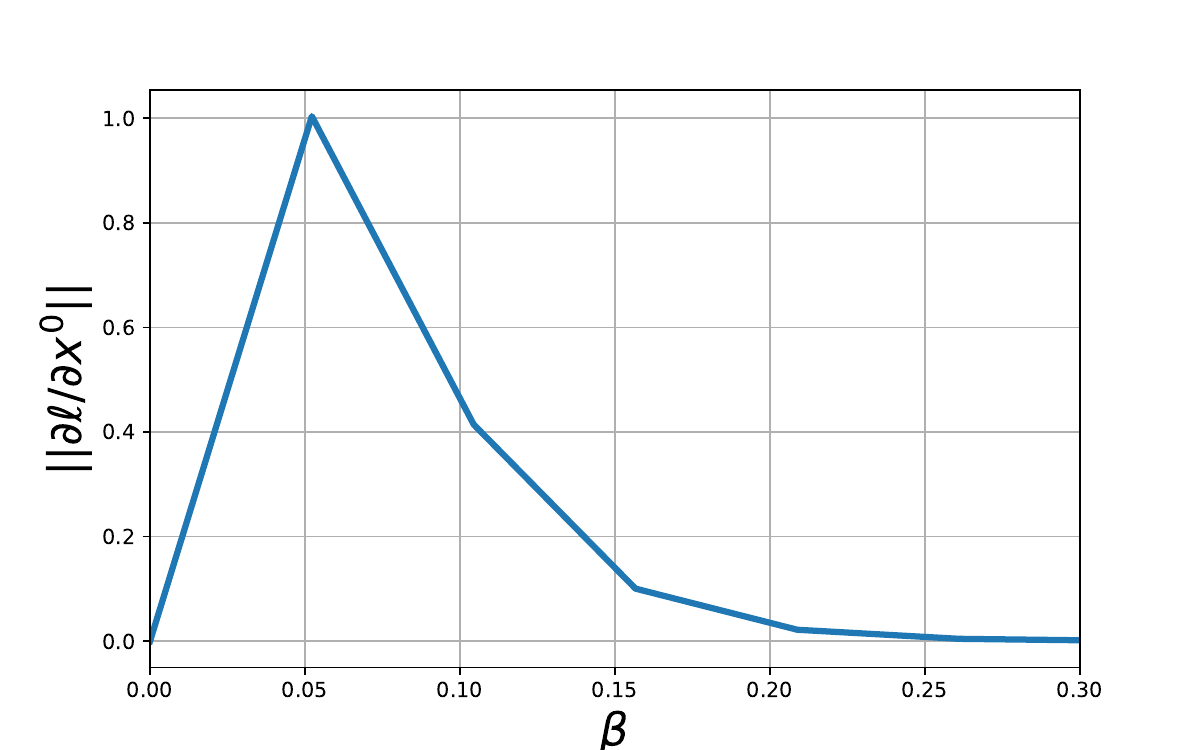}
          \vspace{-2ex}
        \caption{}          
        \end{subfigure}%
        \begin{subfigure}{0.48\textwidth}
          \includegraphics[width=\textwidth]{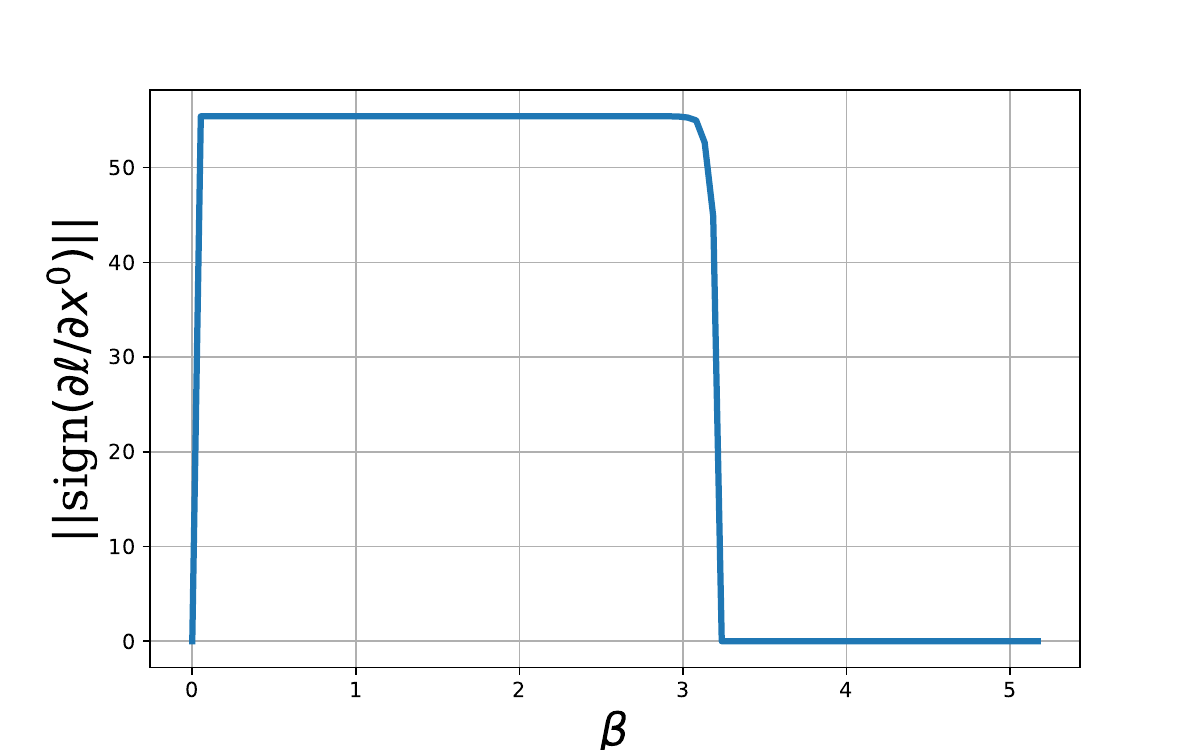}
          \vspace{-2ex}
        \caption{}          
        \end{subfigure}       
    \caption{\em Plots to show how variation in $\beta$ affects (a) norm of input gradient, \ie, $\|\partial \ell(\beta)/\partial \rvx^0\|_2$, (b) norm of sign of input gradient, \ie, $\|\text{sign}(\partial \ell(\beta)/\partial \rvx^0)\|_2$ on a random correctly classified image. Notice that, both input gradient and signed input gradient norm behave similarly, showing a concave behaviour. This plot is computed for \acrshort{BNN-WQ} network on \cifar{-10}, \sresnet{-18}. (b) clearly illustrates how optimum $\beta$ can avoid vanishing gradient issue since $\|\mbox{sign}(\partial \ell(\beta)/\partial \rvx^0)\|_2$ will only be zero if input gradient matrix has only zeros.
    }
\label{fig:beta_vs_ingrad}
\end{figure*}

\SKIP{
\begin{table}[t]
\scriptsize
\centering
\begin{tabular}{ll|ccccccc}
\toprule
\multicolumn{2}{c|}{Methods} & \acrshort{REF} & Adv. Train & \acrshort{BC} \cite{courbariaux2015binaryconnect} & \acrshort{PQ} \cite{bai2018proxquant} & \acrshort{PMF} \cite{ajanthan2019proximal} & \acrshort{MD}-$\tanh$-\acrshort{S} \cite{ajanthan2019mirror} &\acrshort{BNN-WAQ} \cite{hubara2016binarized} \\ 
\midrule
 \multirow{3}{*}{\acrshort{JSV} (Mean)} & Orig. & $8.09\mathrm{e}{+00}$ & $5.15\mathrm{e}{-01}$ & $1.61\mathrm{e}{+01}$ & $2.34\mathrm{e}{+01}$ & $4.46\mathrm{e}{+01}$ & $3.53\mathrm{e}{+01}$ & $1.11\mathrm{e}{+00}$ \\ 
 & \acrshort{NJS} & $9.51\mathrm{e}{-01}$ & $5.70\mathrm{e}{-01}$ & $9.65\mathrm{e}{-01}$ & $1.00\mathrm{e}{+00}$ & $1.01\mathrm{e}{+00}$ & $9.95\mathrm{e}{-01}$ & $2.24\mathrm{e}{-01}$ \\ 
 & \acrshort{HNS} & $2.38\mathrm{e}{+00}$ & $6.11\mathrm{e}{+00}$ & $1.25\mathrm{e}{+00}$ & $3.09\mathrm{e}{+00}$ & $4.43\mathrm{e}{+00}$ & $1.19\mathrm{e}{+01}$ & $4.65\mathrm{e}{+00}$ \\ 

\midrule
 
 \multirow{3}{*}{\acrshort{JSV} (Std.)} & Orig. & $6.27\mathrm{e}{+00}$ & $4.10\mathrm{e}{-01}$ & $1.88\mathrm{e}{+01}$ & $2.35\mathrm{e}{+01}$ & $1.11\mathrm{e}{+02}$ & $3.53\mathrm{e}{+01}$ & $1.97\mathrm{e}{+00}$ \\
 & \acrshort{NJS} & $7.58\mathrm{e}{-01}$ & $6.34\mathrm{e}{-01}$ & $8.62\mathrm{e}{-01}$ & $1.02\mathrm{e}{+00}$ & $2.38\mathrm{e}{+00}$ & $9.71\mathrm{e}{-01}$ & $6.73\mathrm{e}{-01}$ \\
 & \acrshort{HNS} & $4.41\mathrm{e}{+00}$ & $5.34\mathrm{e}{+02}$ & $2.06\mathrm{e}{+01}$ & $7.70\mathrm{e}{+00}$ & $1.46\mathrm{e}{+01}$ & $2.13\mathrm{e}{+02}$ & $1.24\mathrm{e}{+02}$ \\

\midrule

 \multirow{3}{*}{$\|\psi\|_2$} & Orig. & $9.08\mathrm{e}{-03}$ & $2.33\mathrm{e}{-01}$ & $1.18\mathrm{e}{-02}$ & $6.75\mathrm{e}{-03}$ & $8.50\mathrm{e}{-03}$ & $6.20\mathrm{e}{-03}$ & $9.46\mathrm{e}{-03}$ \\
 & \acrshort{NJS} & $4.66\mathrm{e}{-01}$ & $2.35\mathrm{e}{-01}$ & $5.08\mathrm{e}{-01}$ & $5.35\mathrm{e}{-01}$ & $6.65\mathrm{e}{-01}$ & $5.37\mathrm{e}{-01}$ & $1.20\mathrm{e}{-01}$ \\
 & \acrshort{HNS} & $1.48\mathrm{e}{-01}$ & $2.57\mathrm{e}{-01}$ & $2.18\mathrm{e}{-01}$ & $2.28\mathrm{e}{-01}$ & $2.17\mathrm{e}{-01}$ & $2.07\mathrm{e}{-01}$ & $2.44\mathrm{e}{-01}$ \\

\midrule

 \multirow{3}{*}{$\|\partial \ell/\partial \rvx^0\|_2$} & Orig. & $2.42\mathrm{e}{-01}$ & $8.52\mathrm{e}{-02}$ & $3.04\mathrm{e}{-01}$ & $1.57\mathrm{e}{-01}$ & $3.08\mathrm{e}{-01}$ & $2.27\mathrm{e}{-01}$ & $6.33\mathrm{e}{-02}$ \\
 & \acrshort{NJS} & $9.52\mathrm{e}{-01}$ & $1.10\mathrm{e}{-01}$ & $7.90\mathrm{e}{-01}$ & $6.26\mathrm{e}{-01}$ & $8.15\mathrm{e}{-01}$ & $8.91\mathrm{e}{-01}$ & $1.24\mathrm{e}{-01}$ \\
 & \acrshort{HNS} & $7.49\mathrm{e}{-01}$ & $8.18\mathrm{e}{-01}$ & $1.25\mathrm{e}{-01}$ & $7.05\mathrm{e}{-01}$ & $1.19\mathrm{e}{-01}$ & $3.70\mathrm{e}{-01}$ & $2.70\mathrm{e}{-01}$ \\

\midrule

 \multirow{3}{*}{$\|\mbox{sign}\big(\frac{\partial \ell}{\partial \rvx^0}\big)\|_2$} & Orig. &  $5.55\mathrm{e}{+01}$ & $5.54\mathrm{e}{+01}$ & $5.48\mathrm{e}{+01}$ & $5.48\mathrm{e}{+01}$ & $4.99\mathrm{e}{+01}$ & $4.39\mathrm{e}{+01}$ & $5.55\mathrm{e}{+01}$ \\
 & \acrshort{NJS} & $5.55\mathrm{e}{+01}$ & $5.54\mathrm{e}{+01}$ & $5.55\mathrm{e}{+01}$ & $5.55\mathrm{e}{+01}$ & $5.55\mathrm{e}{+01}$ & $5.55\mathrm{e}{+01}$ & $5.55\mathrm{e}{+01}$ \\
 & \acrshort{HNS} & $5.55\mathrm{e}{+01}$ & $5.54\mathrm{e}{+01}$ & $5.55\mathrm{e}{+01}$ & $5.55\mathrm{e}{+01}$ & $5.55\mathrm{e}{+01}$ & $5.55\mathrm{e}{+01}$ & $5.55\mathrm{e}{+01}$ \\

\bottomrule
\end{tabular}
\vspace{1ex}

\caption{\em Mean and standard deviation of \acrfull{JSV}, mean $\|\psi\|_2$, mean $\|\partial \ell/\partial \rvx^0\|_2$ and mean $\|\mbox{sign}(\partial \ell/\partial \rvx^0)\|_2$ for different methods on \cifar{-10} with \sresnet{-18} computed with 500 correctly classified samples. Note here for \acrshort{NJS} and \acrshort{HNS}, \acrshort{JSV} is computed for scaled jacobian \ie $\beta \rmJ$. Also note that, values of $\|\psi\|_2$, $\|\partial \ell(\beta)/\partial \rvx^0\|_2$ and $\|\text{sign}(\partial \ell(\beta)/\partial \rvx^0)\|_2$ are larger for our \acrshort{NJS} and \acrshort{HNS} variant (for most of the networks) as compared with network with no $\beta$, which clearly indicates better gradients for performing gradient based attacks.}


\label{tab:psinorm_jsv_ingrad_study}
\end{table}
}

\begin{table}[t]
\small
\centering
\begin{tabular}{ll|cccc}
\toprule
\multicolumn{2}{c|}{\textbf{Methods}} & \textbf{\acrshort{REF}} & \textbf{Adv. Train} & \textbf{\acrshort{BNN-WQ}} & \textbf{\acrshort{BNN-WAQ}} \\ 
\midrule
 \multirow{3}{*}{\textbf{\acrshort{JSV} (Mean)}} & \textbf{Orig.} & $8.09\mathrm{e}{+00}$ & $5.15\mathrm{e}{-01}$ & $3.53\mathrm{e}{+01}$ & $1.11\mathrm{e}{+00}$ \\ 
 & \textbf{\acrshort{NJS}} & $9.51\mathrm{e}{-01}$ & $5.70\mathrm{e}{-01}$ & $9.95\mathrm{e}{-01}$ & $2.24\mathrm{e}{-01}$ \\ 
 & \textbf{\acrshort{HNS}} & $2.38\mathrm{e}{+00}$ & $6.11\mathrm{e}{+00}$ & $1.19\mathrm{e}{+01}$ & $4.65\mathrm{e}{+00}$ \\ 

\midrule
 
 \multirow{3}{*}{\textbf{\acrshort{JSV} (Std.)}} & \textbf{Orig.} & $6.27\mathrm{e}{+00}$ & $4.10\mathrm{e}{-01}$ & $3.53\mathrm{e}{+01}$ & $1.97\mathrm{e}{+00}$ \\
 & \textbf{\acrshort{NJS}} & $7.58\mathrm{e}{-01}$ & $6.34\mathrm{e}{-01}$ & $9.71\mathrm{e}{-01}$ & $6.73\mathrm{e}{-01}$ \\
 & \textbf{\acrshort{HNS}} & $4.41\mathrm{e}{+00}$ & $5.34\mathrm{e}{+02}$  & $2.13\mathrm{e}{+02}$ & $1.24\mathrm{e}{+02}$ \\

\midrule

 \multirow{3}{*}{$\pmb{\|\psi\|_2}$} & \textbf{Orig.} & $9.08\mathrm{e}{-03}$ & $2.33\mathrm{e}{-01}$ & $6.20\mathrm{e}{-03}$ & $9.46\mathrm{e}{-03}$ \\
 & \textbf{\acrshort{NJS}} & $4.66\mathrm{e}{-01}$ & $2.35\mathrm{e}{-01}$ & $5.37\mathrm{e}{-01}$ & $1.20\mathrm{e}{-01}$ \\
 & \textbf{\acrshort{HNS}} & $1.48\mathrm{e}{-01}$ & $2.57\mathrm{e}{-01}$ & $2.07\mathrm{e}{-01}$ & $2.44\mathrm{e}{-01}$ \\

\midrule

 \multirow{3}{*}{$\pmb{\|\partial \ell/\partial \rvx^0\|_2}$} & \textbf{Orig.} & $2.42\mathrm{e}{-01}$ & $8.52\mathrm{e}{-02}$ & $2.27\mathrm{e}{-01}$ & $6.33\mathrm{e}{-02}$ \\
 & \textbf{\acrshort{NJS}} & $9.52\mathrm{e}{-01}$ & $1.10\mathrm{e}{-01}$ & $8.91\mathrm{e}{-01}$ & $1.24\mathrm{e}{-01}$ \\
 & \textbf{\acrshort{HNS}} & $7.49\mathrm{e}{-01}$ & $8.18\mathrm{e}{-01}$ & $3.70\mathrm{e}{-01}$ & $2.70\mathrm{e}{-01}$ \\

\midrule

 \multirow{3}{*}{$\pmb{\|\mbox{sign}\big(\frac{\partial \ell}{\partial \rvx^0}\big)\|_2}$} & \textbf{Orig.} &  $5.55\mathrm{e}{+01}$ & $5.54\mathrm{e}{+01}$ & $4.39\mathrm{e}{+01}$ & $5.55\mathrm{e}{+01}$ \\
 & \textbf{\acrshort{NJS}} & $5.55\mathrm{e}{+01}$ & $5.54\mathrm{e}{+01}$ & $5.55\mathrm{e}{+01}$ & $5.55\mathrm{e}{+01}$ \\
 & \textbf{\acrshort{HNS}} & $5.55\mathrm{e}{+01}$ & $5.54\mathrm{e}{+01}$ & $5.55\mathrm{e}{+01}$ & $5.55\mathrm{e}{+01}$ \\

\bottomrule
\end{tabular}
\vspace{1ex}

\caption{\em Mean and standard deviation of \acrfull{JSV}, mean $\|\psi\|_2$, mean $\|\partial \ell/\partial \rvx^0\|_2$ and mean $\|\mbox{sign}(\partial \ell/\partial \rvx^0)\|_2$ for different methods on \cifar{-10} with \sresnet{-18} computed with 500 correctly classified samples. Note here for \acrshort{NJS} and \acrshort{HNS}, \acrshort{JSV} is computed for scaled jacobian \ie $\beta \rmJ$. Also note that, values of $\|\psi\|_2$, $\|\partial \ell(\beta)/\partial \rvx^0\|_2$ and $\|\text{sign}(\partial \ell(\beta)/\partial \rvx^0)\|_2$ are larger for our \acrshort{NJS} and \acrshort{HNS} variant (for most of the networks) as compared with network with no $\beta$, which clearly indicates better gradients for performing gradient based attacks.}


\label{tab:psinorm_jsv_ingrad_study}
\end{table}

We also provide the signal propagation properties as well as analysis on input gradient norm before and after using the $\beta$ estimated based on \acrshort{NJS} and \acrshort{HNS} in \tabref{tab:psinorm_jsv_ingrad_study}. For binarized networks as well floating point networks tested on \cifar{-10} dataset using \sresnet{-18} architecture, our \acrshort{HNS} and \acrshort{NJS} variants result in larger values for $\|\psi\|_2$, $\|\partial \ell(\beta)/\partial \rvx^0\|_2$ and $\|\text{sign}(\partial \ell(\beta)/\partial \rvx^0)\|_2$. This reflects the efficacy of our method in overcoming the gradient vanishing issue. It can be also noted that our variants also improves the signal propagation of the networks by bringing the mean \acrshort{JSV} values closer to $1$. 


\subsection{C.6 ~Ablation for $\rho$ vs. \acrshort{PGD++} accuracy}
In this subsection, we provide the analysis on the effect of bounding the gradients of the network output of ground truth class $k$, \ie $\partial \ell(\beta) / \partial \bar{a}_k^K$. Here, we compute $\beta$ using Proposition \myref{1} for all correctly classified images such that $1-\softmax(\beta a_k^K)>\rho$ with different values of $\rho$ and report the \acrshort{PGD++} adversarial accuracy in \tabref{tab:rho_vs_fgsm++}. It can be observed that there is an optimum value of $\rho$ at which \acrshort{PGD++} success rate is maximized, especially on the adversarially trained models. This can also be seen in connection with the non-linearity of the network where at an optimum value of $\beta$, even for robust (locally linear) \citenew{moosavi2019robustness,qin2019adversarial} networks such as adversarially trained models, non-linearity can be maximized and better success rate for gradient based attacks can be achieved. Our \acrshort{HNS} variant essentially tries to achieve the same objective while trying to estimate $\beta$ for each example.
\begin{table}[t]
\small
\centering
\begin{tabular}{l|cccccc}
\toprule
\multirow{2}{*}{\textbf{Methods}} & \multicolumn{6}{c}{\textbf{\acrshort{PGD++} with Varying} $\rho$}\\ \cmidrule{2-7}

& $1e-15$ & $1e-09$ & $1e-05$ & $1e-01$ & $2e-01$ & $5e-01$ \\
\midrule
\textbf{\acrshort{REF}} & 0.00 & 0.00 & 0.00 & 0.00 & 0.00 & 0.00 \\
\textbf{\acrshort{BNN-WQ}} & 9.61 & 0.04 & 0.00 & 0.00 & 0.00 & 0.00 \\
 \midrule
\textbf{\acrshort{REF}}$^*$ & 48.18 & \textbf{47.66} & 48.00 & 53.09 & 54.58 & 57.57 \\
\textbf{\acrshort{BNN-WQ}}$^*$ & 40.66 & \textbf{40.01} & 40.04 & 45.09 & 46.57 & 49.72\\
\bottomrule
\end{tabular}
\vspace{1ex}
\caption{\em Adversarial accuracy on the test set for adversarially trained networks and binary neural networks using $L_\infty$ bounded \acrshort{PGD++} attack with varying $\rho$ as lower bound on the gradient of network output for ground truth class $k$. Here * denotes the adversarially trained models obtained where adversarial samples are generated using $L_\infty$ bounded \acrshort{PGD} attack with with $T = 7$ iterations, $\eta=2$ and $\epsilon=8$. Note, here \acrshort{PGD++} attack refers to \acrshort{PGD} attack where $\partial \ell(\beta) / \partial \bar{a}_k^K$ is bounded by $\rho$ for each sample, where $k$ is ground truth class.
}
\label{tab:rho_vs_fgsm++}
\end{table}

\section{Limitations and Impact}
The proposed \acrshort{FGSM++}/\acrshort{PGD++} variants can only improve the adversarial success rate on the networks suffering from poor signal propagation. To achieve the objective of improving signal propagation to estimate better gradient signals for an adversarial attack, we proposed a temperature scaling based technique. If the Jacobian $\rmJ$ has zero singular values, our temperature scaling technique has no effect in those dimensions, although we find this issue does not arise in practise where our attack variants achieve near complete success rate on \bnns{} in all our experiments. We would like to also point out that although we believe that our proposed attack variants would demystify the fake robustness of multi-bit quantized networks with assumption of poor signal propagation, our paper restricts the study to \bnns{} in all our experiments.

Research that explores the security and safety aspect of machine learning models (for e.g. adversarial attacks on deep neural networks) need to move with caution. Our paper explores improvement of existing adversarial attack to reveal the potential vulnerability of \bnns{}. Similar to most of the existing adversarial attack literature, this could potentially result in breaking many existing machine learning systems. But we also believe that this revelation can encourage researchers to design more robust defense mechanisms that might help the community to further making deployment of  machine learning models more secure and safe. Thus, the positive gains of this research outweighs any potential harmful impacts.

\section{Acknowledgements}
This work was supported by the Australian Research Council Centre of Excellence for Robotic Vision (project number CE140100016). We acknowledge the DATA61, CSIRO for their support and thank Puneet Dokania for useful discussions.

\bibliography{iga-aaai}

\end{document}